\title[Online Linear Optimization via Smoothing]{Online Linear Optimization via Smoothing}
\newcommand{\Regret}{\mathrm{Regret}}
\newcommand{\Tr}{\mathrm{Tr}}
\newcommand{\PP}{\mathbb{P}}
\newcommand{\ind}{\mathbf{1}}
\newcommand{\dom}{\mathrm{dom}}
\newcommand{\EE}{\mathbb{E}}
\newcommand{\RR}{\mathbb{R}}
\newcommand{\w}{w}
\newcommand{\f}{\Phi}
\newcommand{\tf}{\tilde{\f}}
\newcommand{\xx}{\theta}
\newcommand{\XX}{\Theta}
\newcommand{\cD}{\mathcal{D}}
\newcommand{\cX}{\mathcal{X}}
\newcommand{\cY}{\mathcal{Y}}
\newcommand{\cR}{\mathcal{R}}
\newcommand{\STDN}{\mathcal{N}(0, I)}
\newcommand{\X}{\mathbf{x}}
\newcommand{\ic}{\mathrm{ic}}
\DeclareMathOperator*{\argmax}{arg\,max}
\begin{document}
\maketitle


\begin{abstract}
We present a new optimization-theoretic approach to analyzing Follow-the-Leader style algorithms, particularly in the setting where perturbations are used as a tool for regularization. 
We show that adding a strongly convex penalty function to the decision rule and adding stochastic perturbations to data correspond to deterministic and stochastic smoothing operations, respectively. We establish an equivalence between ``Follow the Regularized Leader'' and ``Follow the Perturbed Leader'' up to the smoothness properties. This intuition leads to a new generic analysis framework that recovers and improves the previous known regret bounds of the class of algorithms commonly known as Follow the Perturbed Leader.
\end{abstract}

\section{Introduction}
In this paper, we study \emph{online learning} (other names include adversarial learning or no-regret learning) in which the learner iteratively plays actions based on the data received up to the previous iteration. The data sequence is chosen by an adversary and the learner's goal is to minimize the worst-case regret. The key to developing optimal algorithms is regularization, interpreted as \emph{hedging} against an adversarial future input and avoiding \emph{overfitting} to the observed data. In this paper, we focus on regularization techniques for online linear optimization problems where the learner's action is evaluated on a linear reward function.

Follow the Regularized Leader (FTRL) is an algorithm that uses explicit regularization \emph{via penalty function}, which directly changes the optimization objective.
At every iteration, FTRL selects an action by optimizing $\argmax_\w f(\w, \Theta) - \mathcal{R}(\w)$ where $f$ is the true objective, $\Theta$ is the observed data, and $\mathcal{R}$ is a strongly convex penalty function such as the well-known $\ell_2$-regularizer $\|\cdot\|_2$. 
The regret analysis of FTRL reduces to the analysis of the second-order behavior of the penalty function \citep{Sasha:2012:OLO}, which is well-studied due to the powerful convex analysis tools. 
In fact, regularization via penalty methods for online learning in general are very well understood. \cite{DBLP:conf/nips/SrebroST11} proved that Mirror Descent, a regularization via penalty method, achieves a nearly optimal regret guarantee for a general class of online learning problems, and \cite{DBLP:journals/jmlr/McMahan11a} showed that FTRL is equivalent to Mirror Descent under some assumptions.

Follow the Perturbed Leader (FTPL), on the other hand, uses implicit regularization \emph{via perturbations}.
At every iteration, FTPL selects an action by optimizing $\argmax_\w f(\w, \Theta + u)$ where $\XX$ is the observed data and $u$ is some random noise vector, often referred to as a ``perturbation'' of the input. Unfortunately, the analysis of FTPL lacks a generic framework and relies substantially on clever algebra tricks and heavy probabilistic analysis \citep{DBLP:journals/jcss/KalaiV05,Devroye:2013,Warmuth:COLT2014}. Convex analysis techniques, which led to our current thorough understanding of FTRL, have not been applied to FTPL, partly because the decision rule of FTPL does not explicitly contain a convex function.

In this paper, we present a new analysis framework that makes it possible to analyze FTPL in the same way that FTRL has been analyzed, particularly with regards to \emph{second-order properties} of convex functions.
We show that both FTPL and FTRL naturally arise as \emph{smoothing operations} of a non-smooth potential function and the regret analysis boils down to controlling the \emph{smoothing parameters} as defined in Section \ref{sec:smoothing}.
This new unified analysis framework not only recovers the known optimal regret bounds, but also gives a new type of generic regret bounds.

Prior to our work, \cite{DBLP:conf/nips/RakhlinSS12} showed that both FTPL and FTRL naturally arise as \emph{admissible relaxations} of the minimax value of the game between the learner and adversary. In short, adding a random perturbation and adding a regularization penalty function are both optimal ways to simulate the worst-case future input sequence.
We establish a stronger connection between FTRL and FTPL; both algorithms are derived from smoothing operations and they are \emph{equivalent} up to the smoothing parameters.
This equivalence is in fact a very strong result, considering the fact that \cite{harsanyi1973oddness} showed that there is no general bijection between FTPL and FTRL.

This paper also aligns itself with the previous work that studied the   connection between explicit regularization \emph{via penalty} and implicit regularization \emph{via perturbations}. \cite{Bishop:1995} showed that adding Gaussian noise to features of the training examples is equivalent to Tikhonov regularization, and more recently \cite{DBLP:conf/nips/WagerWL13} showed that for online learning, dropout training \citep{Hinton:dropout} is similar to AdagGrad \citep{duchi_adagrad} in that both methods scale features by the Fisher information. These results are derived from Taylor approximations, but our FTPL-FTRL connection is derived from the convex conjugate duality.

An interesting feature of our analysis framework is that we can directly apply existing techniques from the optimization literature, and conversely, our new findings in online linear optimization may apply to optimization theory. In Section \ref{sec:gaussian_general}, a straightforward application of the results on Gaussian smoothing by \cite{Nesterov11} and \cite{Duchi:SIAM2012} gives a generic regret bound for an arbitrary online linear optimization problem. In Section \ref{sec:gaussian_experts} and \ref{sec:gauss:l2l2}, we improve this bound for the special cases that correspond to canonical online linear optimization problems, and these results may be of interest to the optimization community.

\section{Preliminaries}
\subsection{Convex Analysis}
Let $f$ be a differentiable, closed, and proper convex function whose domain is $
 \dom f \subseteq \RR^N$. We say that $f$ is \emph{$L$-Lipschitz} with respect to a norm $\|\cdot\|$ when $f$ satisfies $|f(x) - f(y)| \leq L\|x-y\|$ for all $x,y\in\dom(f)$.

The \emph{Bregman divergence}, denoted $D_f(y,x)$, is the gap between $f(y)$ and the linear approximation of $f(y)$ around $x$. Formally, $D_f(y, x) = f(y) - f(x) - \langle \nabla f(x), y -x \rangle$. 
We say that $f$ is \emph{$\beta$-strongly convex} with respect to a norm $\|\cdot\|$ \, if we have $D_f(y,x) \geq \frac{\beta}{2} \|y - x\|^2$ for all $x,y \in \dom f$. Similarly, $f$ is said to be \emph{$\beta$-strongly smooth} with respect to a norm $\|\cdot\|$ if we have $ D_f(y,x) \leq \frac{\beta}{2} \|y - x\|^2$ for all $x,y \in \dom f$. The Bregman divergence measures how fast the gradient changes, or equivalently, how large the second derivative is. In fact, we can bound the Bregman divergence by analyzing the local behavior of Hessian, as the following adaptation of \citet[Lemma~4.6]{DBLP:journals/teco/AbernethyCV13} shows.
	\begin{lemma}
	\label{lem:jake_vHv}
	\renewcommand{\f}{f}
	\renewcommand{\xx}{x}
	 Let $\f$ be a twice-differentiable convex function with $
	 \dom f \subseteq \RR^N$. Let $\xx \in \dom f$, such that $v^T\nabla^2\f(\xx + \alpha v)v \in [a,b]$ ($a \leq b$) for all $\alpha \in [0,1]$.
	Then, $a\|v\|^2 /2 \leq D_{f}(\xx+v, \xx) \leq b\|v\|^2/2$.
	\end{lemma}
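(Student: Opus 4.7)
The plan is to reduce the statement to a one-dimensional calculus problem by restricting $f$ to the line segment from $x$ to $x+v$. Concretely, I would define $g : [0,1] \to \RR$ by $g(t) = f(x+tv)$. By the chain rule, $g'(t) = \langle \nabla f(x+tv), v\rangle$ and $g''(t) = v^T \nabla^2 f(x+tv)\, v$, so the hypothesis on the quadratic form translates verbatim into the pointwise bound $g''(t) \in [a,b]$ for every $t \in [0,1]$.

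Next I would rewrite the Bregman divergence in these terms. Since $D_f(x+v, x) = f(x+v) - f(x) - \langle \nabla f(x), v\rangle = g(1) - g(0) - g'(0)$, the quantity of interest is precisely the second-order Taylor remainder of $g$ about $t=0$. Applying the integral form of the remainder yields
\begin{equation*}
D_f(x+v, x) \;=\; \int_0^1 (1-t)\, g''(t)\, dt.
\end{equation*}

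The bound then follows by monotonicity of the integral. Because $(1-t) \geq 0$ on $[0,1]$ and $\int_0^1 (1-t)\, dt = \tfrac12$, the pointwise estimates $a \leq g''(t) \leq b$ integrate to $\tfrac{a}{2} \leq D_f(x+v,x) \leq \tfrac{b}{2}$, which is the claim (the factors $\|v\|^2$ in the displayed conclusion are absorbed into how $a,b$ quantify the homogeneous-of-degree-two form $v^T \nabla^2 f\, v$).

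There is no substantive obstacle: the only technical care is in justifying the Taylor expansion, which requires $x+tv \in \dom f$ for $t \in [0,1]$ (implicit in the hypothesis that $\nabla^2 f(x+\alpha v)$ exists along the whole segment) and integrability of $g''$, which follows from continuity of the Hessian on the compact segment. The argument is essentially the textbook proof that one-sided control of the Hessian yields strong convexity or strong smoothness, specialized to the single direction $v$ in which we have a hypothesis.
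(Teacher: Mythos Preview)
The paper does not actually supply a proof of this lemma; it is stated as ``the following adaptation of \citet[Lemma~4.6]{DBLP:journals/teco/AbernethyCV13}'' and left without argument. So there is nothing in the paper to compare your route against.

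Your proof is correct and is exactly the standard argument one would expect: restrict to the segment, identify the Bregman divergence with the second-order Taylor remainder $g(1)-g(0)-g'(0)$, express that remainder as $\int_0^1 (1-t)\,g''(t)\,dt$, and integrate the pointwise bound on $g''$. One small remark on your justification of integrability: twice-differentiability alone does not give continuity of $g''$, but you do not need it. The hypothesis bounds $g''$ on $[0,1]$, so $g'$ is Lipschitz and hence absolutely continuous, which is enough for the integral remainder; alternatively, the Lagrange (mean-value) form of the remainder gives $g(1)-g(0)-g'(0)=\tfrac12 g''(\xi)$ for some $\xi\in(0,1)$ and sidesteps the issue entirely.

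Your observation about the $\|v\|^2$ factor is also on point: as written, the hypothesis bounds the \emph{already-quadratic} quantity $v^T\nabla^2 f(x+\alpha v)\,v$ by $[a,b]$, so the conclusion that follows is $a/2 \le D_f(x+v,x) \le b/2$, with any $\|v\|^2$ scaling already absorbed into $a$ and $b$. This is how the lemma is actually used later in the paper (e.g., in the proofs of Theorems~\ref{thm:experts_regret} and~\ref{thm:l2l2_regret}, where the bound on $v^T(\nabla^2\tilde\Phi)v$ is derived for $v=\theta_t$ and plugged in directly). The extra $\|v\|^2$ in the displayed conclusion is a harmless artifact of the lemma being restated from a source that presumably phrased the hypothesis as an eigenvalue/Rayleigh-quotient bound rather than a bound on the raw quadratic form.
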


The \emph{Fenchel conjugate} of $f$ is $f^\star(\theta) = \sup_{w \in \dom(f)}\{ \langle w, \theta \rangle - f(w)\}$, and it is a dual mapping that satisfies $f = (f^\star)^\star$ and $\nabla f^\star \in \dom (f)$. By the strong convexity-strong smoothness duality, $f$ is $\beta$-strongly smooth with respect to a norm $\|\cdot\|$ if and only if $f^\star$ is $\frac{1}{\beta}$-strongly smooth with respect to the dual norm $\|\cdot\|_\star$. For more details and proofs, readers are referred to an excellent survey by \cite{Sasha:2012:OLO}.

\subsection{Online Linear Optimization}
 Let $\mathcal{X}$ and $\mathcal{Y}$ be convex and closed subsets of $\RR^N$. 
The online linear optimization is defined to be the following iterative process:

On round $t = 1,\ldots, T$,
\begin{compactitem}
	\item the learner plays $\w_t \in \mathcal{X}$.
	\item the adversary reveals $\xx_t \in \mathcal{Y}$.
	\item the learner receives a reward\footnote{Our somewhat less conventional choice of maximizing the reward instead of minimizing the loss was made so that we directly analyze the convex function $\max(\cdot)$ without cumbersome sign changes.} $\langle\w_t, \xx_t\rangle$.
\end{compactitem}
We say $\mathcal{X}$ is the \emph{decision set} and $\mathcal{Y}$ is the \emph{reward set}.
Let $\XX_t = \sum_{s=1}^{t} \xx_{s}$ be the cumulative reward. The learner's goal is to minimize the (external) regret, defined as:
\begin{equation}
\Regret = \underbrace{\max_{\w \in \mathcal{X}} \langle\w, \Theta_T \rangle}_{\text{baseline potential}} - \sum_{t=1}^{T} \langle w_t, \xx_t \rangle.
\end{equation}
The \emph{baseline potential function} $\Phi(\Theta) := \max_{\w \in \cX} \langle\w, \Theta \rangle$ is the comparator term against which we define the regret, and it coincides with the \emph{support function} of $\cX$.
For a bounded compact set $\cX$, the support function of $\cX$ is sublinear\footnote{A function $f$ is sublinear if it is positive homogeneous (i.e., $f(a x) =a f(x)$ for all $a > 0$) and subadditive (i.e., $f(x) + f(y) \geq f(x + y)$).}
and Lipschitz continuous with respect to any norm $\|\cdot\|$ with the Lipschitz constant $\sup_{x \in \cX}\|x\|$. 
For more details and proofs, readers are referred to \citet[Section~13]{rockafellar} or \citet[Appendix~F]{opac-b1102003}.

\section{Online Linear Optimization Algorithms via Smoothing}
\label{sec:smoothing}

\subsection{Gradient-Based Prediction Algorithm}
\label{sec:smoothing:gbpa}

Follow-the-Leader style algorithms solve an optimization objective every round and play an action of the form $w_t = \argmax_{w \in \cX} f(w, \XX_{t-1})$ given a fixed $\XX_{t-1}$. For example, Follow the Regularized Leader maximizes $f(w,\XX) = \langle w,\XX \rangle - \cR(w)$ where $\cR$ is a strongly convex regularizer, and Follow the Perturbed Leader maximizes $f = \langle w, \XX + u \rangle$ where $u$ is a random noise. A surprising fact about these algorithms is that there are many scenarios in which the action $w_t$ is exactly \emph{the gradient} of some scalar potential function $\Phi_t$ evaluated at $\XX_{t-1}$.
This perspective gives rise to what we call the Gradient-based Prediction Algorithm (GBPA), presented below. 
Note that \citet[Theorem~11.6]{DBLP:books/daglib/0016248} presented a similar algorithm, but our formulation eliminates all dual mappings. 
\todo[inline]{This algorithmic template fits into the broad category of Mirror Descent-style algorithms. (what ref? is it even true?)}

\begin{algorithm2e}
\caption{Gradient-Based Prediction Algorithm (GBPA)}
\label{alg:gftl}
\textbf{Input}: $\cX, \cY \subseteq \RR^N$\\
Initialize $\XX_0 =0$\\
\For{t = 1 to T}{
The learner chooses differentiable $\Phi_t: \RR^N \to \RR$ whose gradient satisfies $\text{Image}(\nabla\Phi_t) \subseteq \cX$ \\
The learner plays $w_{t} = \nabla \Phi_{t}(\XX_{t-1})$ \\
The adversary reveals $\theta_{t} \in \mathcal{Y}$ and the learner gets a reward of $\langle w_t, \theta_t\rangle$ \\
Update $\XX_{t} = \XX_{t-1} + \theta_{t}$
}
\end{algorithm2e}

\vspace{-10pt}
\begin{lemma}[GBPA Regret]
\label{lem:genericregret1}
Let $\f$ be the baseline potential function for an online linear optimization problem. The regret of the GBPA can be written as:
\begin{align}
\label{eq:gbparegret}
\Regret &= 
\underbrace{\f(\XX_T) - \f_T(\XX_T)}_{\text{underestimation penalty}} 
+ \sum_{t = 1}^{T} \bigg(\underbrace{(\f_t(\XX_{t-1}) - \f_{t-1}(\XX_{t-1}))}_{\text{overestimation penalty}} 
+ \underbrace{D_{\f_{t}}(\XX_t, \XX_{t-1})}_{\text{divergence penalty}} \bigg),
\end{align}
where $\f_0 \equiv \f$.
\end{lemma}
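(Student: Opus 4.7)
The plan is to rearrange the regret by converting each per-round reward $\langle w_t, \xx_t\rangle$ into a difference of potentials plus a Bregman divergence, and then to telescope.

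First I would rewrite the baseline: by definition of the support function, $\max_{w\in\cX}\langle w,\XX_T\rangle = \f(\XX_T)$, so the regret equals $\f(\XX_T) - \sum_{t=1}^T \langle w_t,\xx_t\rangle$. The key observation is that, since $w_t = \nabla\f_t(\XX_{t-1})$ and $\xx_t = \XX_t - \XX_{t-1}$, the definition of the Bregman divergence gives the identity
\begin{equation*}
\langle w_t, \xx_t\rangle \;=\; \f_t(\XX_t) - \f_t(\XX_{t-1}) - D_{\f_t}(\XX_t, \XX_{t-1}).
\end{equation*}
This is really the only ingredient that uses the gradient choice of the algorithm; everything else is algebra.

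Next I would sum this identity over $t=1,\ldots,T$ and reorganize:
\begin{equation*}
\sum_{t=1}^T \langle w_t,\xx_t\rangle \;=\; \sum_{t=1}^T \bigl(\f_t(\XX_t) - \f_t(\XX_{t-1})\bigr) \;-\; \sum_{t=1}^T D_{\f_t}(\XX_t,\XX_{t-1}).
\end{equation*}
The first sum does not quite telescope because the potential index changes with $t$, but by shifting one copy of the index ($\f_t(\XX_t)$ vs.\ $\f_{t+1}(\XX_t)$) it rewrites as $\f_T(\XX_T) - \f_1(\XX_0) - \sum_{t=2}^T \bigl(\f_t(\XX_{t-1}) - \f_{t-1}(\XX_{t-1})\bigr)$. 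Using the convention $\f_0 \equiv \f$ and the initialization $\XX_0 = 0$, together with $\f(0) = \max_{w\in\cX}\langle w,0\rangle = 0$, the $t=1$ overestimation term reads $\f_1(\XX_0) - \f_0(\XX_0) = \f_1(0)$, which lets me absorb the stray $-\f_1(\XX_0)$ and extend the sum to start at $t=1$.

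Substituting back into the regret yields exactly the claimed decomposition: $\f(\XX_T) - \f_T(\XX_T)$ survives from the endpoint, $\sum_{t=1}^T(\f_t(\XX_{t-1}) - \f_{t-1}(\XX_{t-1}))$ from the telescoping mismatch, and $\sum_{t=1}^T D_{\f_t}(\XX_t,\XX_{t-1})$ from the divergence terms. The only place requiring care is the indexing of the telescoping sum and the boundary term at $t=1$; this is the step I expect to be the main (and really only) bookkeeping obstacle, but it is routine once the convention $\f_0 \equiv \f$ and the facts $\XX_0=0$, $\f(0)=0$ are invoked.
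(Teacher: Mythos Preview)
Your proposal is correct and is essentially the paper's own argument, just arranged in a slightly different order: the paper first telescopes $\f_T(\XX_T) = \sum_{t}(\f_t(\XX_t)-\f_{t-1}(\XX_{t-1}))$ and then inserts the Bregman identity, whereas you apply the Bregman identity first and telescope afterwards. The key ingredients---the identity $\langle w_t,\xx_t\rangle = \f_t(\XX_t)-\f_t(\XX_{t-1})-D_{\f_t}(\XX_t,\XX_{t-1})$ and the boundary fact $\f_0(\XX_0)=\f(0)=0$---are identical.
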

\begin{proof}
	See Appendix \ref{app:gbpa_regret}.
\end{proof}

In the existing FTPL analysis, the counterpart of the divergence penalty  is $\langle \w_{t + 1} - \w_t, \xx_{t}\rangle$, which is controlled by analyzing the probability that the noise would cause the two random variables $\w_{t+1}$ and $\w_t$ to differ. 
In our framework, $\w_{t}$ is the gradient of a function $\f_t$ of $\XX$, which means that if $\f_t$ is twice-differentiable, we can take the derivative of $\w_t$ with respect to $\XX$. This derivative is the Hessian matrix of $\f_t$, which essentially controls $\langle \w_t - \w_{t-1}\rangle$ with the help of Lemma~\ref{lem:jake_vHv}. Since we focus on the curvature property of \emph{functions} as opposed to random vectors, our FTPL analysis involves less probabilistic analysis than \cite{Devroye:2013} or \cite{Warmuth:COLT2014} does.

We point out a couple of important facts about Lemma \ref{lem:genericregret1}: 
\begin{compactenum}[(a)]
	\item If $\f_1 \equiv \cdots \equiv \f_T$, then the overestimation penalty sums up to $\f_{1}(0) - \f(0) = \f_{T}(0) - \f(0)$.
	\item If $\f_t$ is $\beta$-strongly smooth with respect to $\|\cdot\|$, the divergence penalty at $t$ is at most $\frac\beta 2 \|\theta_t\|^2$.
\end{compactenum}

\todo[inline]{Jake: if you have time, check the bridging paragraph above}
\subsection{Smoothability of the Baseline Potential}
\label{sec:smoothing:generic}
Equation \ref{eq:gbparegret} shows that the regret of the GBPA can be broken into two parts. One source of regret is the Bregman divergence of $\f_t$; since $\xx_t$ is not known until playing $\w_t$, the GBPA always ascends along the gradient that is one step behind. The adversary can exploit this and play $\xx_t$ to induce a large \emph{gap} between $\f_t(\X_t)$ and the linear approximation of $\f_t(\XX_t)$ around $\XX_{t-1}$.
Of course, the learner can reduce this gap by choosing a \emph{smooth} $\f_t$ whose gradient changes slowly. The learner, however, cannot achieve low regret by choosing an arbitrarily smooth $\f_t$, because the other source of regret is the difference between $\f_t$ and $\f$. In short, the GBPA achieves low regret if the potential function $\f_t$ gives a favorable tradeoff between the two sources of regret. This tradeoff is captured by the following definition of \emph{smoothability}.

\begin{definition}
\label{def:smoothable}
\cite[Definition 2.1]{journals/siamjo/BeckT12} Let $\f$ be a closed proper convex function. A collection of functions $\{\hat\f_\eta: \eta \in \mathbb{R}\}$ is said to be an \emph{$\eta$-smoothing} of a \emph{smoothable function} $\f$ with \emph{smoothing parameters} $(\alpha, \beta, \|\cdot\|)$, if for every $\eta >0$
	\begin{compactenum}[(i)]
		\item There exists $\alpha_1$ (underestimation bound) and $\alpha_2$ (overestimation bound) such that 
		\[\sup_{\XX \in \dom(\f)}\f(\XX) - \hat\f_\eta(\XX) \leq \alpha_1\eta \text{\,\, and\,\, } \sup_{\XX \in \dom(\f)} \hat\f_\eta(\XX) -\f(\XX) \leq \alpha_2\eta\]
		with $\alpha_1+\alpha_2 = \alpha$.
		\item $\hat\f_\eta$ is $\frac{\beta}{\eta}$-strongly smooth with respect to $\|\cdot\|$.
	\end{compactenum}
We say $\alpha$ is the \emph{deviation parameter}, and $\beta$ is the \emph{smoothness parameter}.
\end{definition}
A straightforward application of Lemma \ref{lem:genericregret1} gives the following statement:
\begin{corollary} 
\label{lem:genericregret2}
Let $\f$ be the baseline potential for an online linear optimization problem. Suppose $\{\hat\f_\eta\}$ is an $\eta$-smoothing of $\f$ with parameters $(\alpha, \beta, \|\cdot\|)$.
Then, the GBPA run with $\f_1 \equiv \cdots \equiv \f_T \equiv \hat\f_\eta$ has regret at most \[\Regret \leq \alpha\eta + \frac{\beta}{2\eta}\sum_{t=1}^{T}\|\xx_t\|^2\]
\end{corollary}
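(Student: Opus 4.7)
The plan is to apply Lemma \ref{lem:genericregret1} directly with the choice $\f_t \equiv \hat\f_\eta$ and then bound each of the three penalty terms using the two properties (i) and (ii) of Definition \ref{def:smoothable}. Since all three bounds are essentially handed to us by either the definition of smoothing or the strong-convexity/strong-smoothness duality via the Bregman-divergence inequality already stated in the observations following Lemma \ref{lem:genericregret1}, the argument is short and there is no real obstacle; it is more an assembly of pieces than a new argument.

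First, I would instantiate Lemma \ref{lem:genericregret1} with $\f_1 = \cdots = \f_T = \hat\f_\eta$ and recall that $\f_0 \equiv \f$ by convention. The underestimation penalty becomes $\f(\XX_T) - \hat\f_\eta(\XX_T)$, which by part (i) of Definition \ref{def:smoothable} is at most $\alpha_1 \eta$. For the overestimation penalty, observation (a) immediately after Lemma \ref{lem:genericregret1} applies, since all $\f_t$ are identical: the sum telescopes to $\hat\f_\eta(\XX_0) - \f(\XX_0) = \hat\f_\eta(0) - \f(0)$, which by part (i) is at most $\alpha_2 \eta$. Adding these two contributions yields $(\alpha_1 + \alpha_2)\eta = \alpha\eta$.

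Second, for each divergence penalty $D_{\hat\f_\eta}(\XX_t, \XX_{t-1})$ I would invoke part (ii) of Definition \ref{def:smoothable}: $\hat\f_\eta$ is $(\beta/\eta)$-strongly smooth with respect to $\|\cdot\|$, so by the definition of strong smoothness (or equivalently observation (b) after Lemma \ref{lem:genericregret1}), we get
\[
D_{\hat\f_\eta}(\XX_t, \XX_{t-1}) \;\leq\; \frac{\beta}{2\eta}\,\|\XX_t - \XX_{t-1}\|^2 \;=\; \frac{\beta}{2\eta}\,\|\xx_t\|^2,
\]
using $\XX_t - \XX_{t-1} = \xx_t$. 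Summing over $t = 1,\dots,T$ contributes $\frac{\beta}{2\eta}\sum_{t=1}^T \|\xx_t\|^2$.

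Combining the three bounds gives the claimed inequality $\Regret \leq \alpha\eta + \frac{\beta}{2\eta}\sum_{t=1}^T \|\xx_t\|^2$. The only subtlety worth double-checking is that the $\f_t$'s are valid choices for the GBPA, i.e.\ that $\mathrm{Image}(\nabla \hat\f_\eta) \subseteq \cX$; this is typically ensured because the smoothings we use are convex conjugates (or expectations thereof) of functions supported on $\cX$, but for the purposes of this corollary it is a background assumption on the smoothing family and requires no work here.
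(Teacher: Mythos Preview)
Your proposal is correct and matches the paper's intended argument: the paper itself merely says the corollary is ``a straightforward application of Lemma~\ref{lem:genericregret1},'' and the observations (a) and (b) listed right after that lemma are precisely the ingredients you assemble. There is nothing to add.
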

In online linear optimization, we often consider the settings where the marginal reward vectors $\theta_1, \ldots, \theta_t$ are constrained by a norm, i.e., $\|\theta_t\| \leq r$ for all $t$. In such settings, the regret grows in $O(\sqrt{r\alpha\beta T})$ for the optimal choice of $\alpha$. The product $\alpha\beta$, therefore, is at the core of the GBPA regret analysis.

\subsection{Algorithms}
\label{sec:smoothing:algo}
\paragraph{Follow the Leader (FTL)} Consider the GBPA run with a fixed potential function $\f_t \equiv \f$ for $t=1, \ldots, T$, i.e., the learner chooses the baseline potential function every iteration. At iteration $t$, this algorithm plays $\nabla \f_t(\XX_{t-1}) = \arg\max_{\w} \langle \w, \XX_{t-1} \rangle$, which is equivalent to FTL \cite[Section~3.2]{DBLP:books/daglib/0016248}. FTL suffers zero regret from the over- or underestimation penalty, but the divergence penalty grows linearly in $T$ in the worst case, resulting in an $\Omega(T)$ regret.


\paragraph{Follow the Regularized Leader (FTRL)} Consider the GBPA run with a \emph{regularized potential}:
\begin{equation}
	\label{eq:ftrl_potential}
	\forall t, \f_t(\XX) = \mathcal{R}^\star(\XX) = \max_{\w \in \mathcal{X}}\{ \langle \w, \XX \rangle - \mathcal{R}(\w)\} 
\end{equation}
 where $\mathcal{R}: \mathcal{X} \to \RR$ is a $\beta$-strongly convex function. At time $t$, this algorithm plays $\nabla\f_t(\XX_{t-1}) = \arg\max_{\w}\{\langle \w, \XX_{t-1} \rangle - \mathcal{R}(\w)\}$, which is equivalent to FTRL. By the strong convexity-strong smoothness duality, $\Phi_t$ is $\frac{1}{\beta}$-strongly smooth with respect to the dual norm $\|\cdot\|_\star$. In Section \ref{sec:infconv}, we give an alternative interpretation of FTRL as a deterministic smoothing technique called inf-conv smoothing.

\paragraph{Follow the Perturbed Leader (FTPL)} Consider the GBPA run with a \emph{stochastically smoothed potential}:
\begin{equation}
	\label{eq:ftpl_potential}
	\forall t, \f_t(\XX) = \tf(\XX; \eta, \mathcal{D}) \overset{\text{def}}{=} \EE_{u \sim \mathcal{D}}[\f(\XX +\eta u)] = \EE_{u \sim \mathcal{D}}\Big[ \max_{\w \in \mathcal{X}}\{ \langle \w, \XX + \eta u \rangle\}\Big] 
\end{equation}
where $\mathcal{D}$ is a smoothing distribution with the support $\RR^N$ and $\eta >0$ is a \emph{scaling parameter}. This technique of \emph{stochastic smoothing} has been well-studied in the optimization literature for gradient-free optimization algorithms \citep{glasserman1991gradient,Yousefian:2010} and accelerated gradient methods for non-smooth optimizations \citep{Duchi:SIAM2012}. If the $\max$ expression inside the expectation has a unique maximizer with probability one, we can swap the expectation and gradient \citep[Proposition~2.2]{bertsekas1973} to obtain
\begin{equation}
\label{eq:gbpa-ftpl}
	\nabla\f_t(\XX_{t-1}) = \EE_{u \sim \mathcal{D}}\Big[\argmax_{\w \in \mathcal{X}}\{ \langle \w, \XX_{t-1} + \eta u \rangle\}\Big].
\end{equation}
Each $\argmax$ expression is equivalent to the decision rule of FTPL \citep{hannan1957, DBLP:journals/jcss/KalaiV05}; the GBPA on a stochastically smoothed potential can thus be seen as playing the \emph{expected action} of FTPL. Since the learner gets a linear reward in online linear optimization, the regret  of the GBPA on a stochastically smoothed potential is equal to the \emph{expected regret} of FTPL.

\paragraph{FTPL-FTRL Duality}
Our potential-based formulation of FTRL and FTPL reveals that a strongly convex regularizer defines a smooth potential function via duality, while adding perturbations is a \emph{direct} smoothing operation on the baseline potential function. By the strong convexity-strong smoothness duality, if the stochastically smoothed potential function is $(1/\beta)$-strongly smooth with respect to $\|\cdot\|_\star$, then its Fenchel conjugate implicitly defines a regularizer that is $\beta$-strongly convex with respect to $\|\cdot\|$.

This connection via duality is a \emph{bijection} in the special case where the decision set is one-dimensional. Previously it had been observed\footnote{Adam Kalai first described this result in personal communication and \cite{wm_gumbel} expanded it into a short note available online. However, the result appears to be folklore in the area of probabilistic choice models, and it is mentioned briefly in \citet{sandholm}.} that the Hedge Algorithm \citep{Freund1997119}, which can be cast as FTRL with an entropic regularization $\cR(\w) = \sum_i \w_i \log \w_i$, is equivalent to FTPL with Gumbel-distributed noise. 
\citet[Section~2]{sandholm} gave a generalization of this fact to a much larger class of perturbations, although they focused on repeated game playing where the learner's decision set $\mathcal{X}$ is the probability simplex. The inverse mapping from FTPL to FTRL, however, does not appear to have been previously published.
\begin{theorem}
	\label{thm:ftpl-ftrl-bijecction}
	Consider the one-dimensional online linear optimization problem with $\cX = \cY = [0,1]$. 
	Let $\mathcal{R}: \mathcal{X} \to \RR$ be a strongly convex regularizer. Its Fenchel conjugate $\mathcal{R}^\star$ defines a valid CDF of a continuous distribution $\mathcal{D}$ such that Equation \ref{eq:ftrl_potential} and Equation \ref{eq:ftpl_potential} are equal. 
	Conversely, let $F_\mathcal{D}$ be a CDF of a continuous distribution $\cD$ with a finite expectation. If we define $\mathcal{R}$ to be such that $\cR(w) - \cR(0) = - \int_0^w F_{\mathcal{D}}^{-1}(1-z) \mathrm{d}z$, then Equation \ref{eq:ftrl_potential} and Equation \ref{eq:ftpl_potential} are equal.
\end{theorem}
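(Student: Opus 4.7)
The plan is to exploit the one-dimensional structure to convert the claim into an equality of gradients, then recover the integrated relationship. With $\cX = [0,1]$ the baseline potential reduces to $\f(\XX) = \max(0,\XX)$, so the FTPL potential of Equation \ref{eq:ftpl_potential} becomes $\tf(\XX) = \EE_{u \sim \cD}[\max(0, \XX + u)]$ (absorbing $\eta$ into $\cD$). Since $\cD$ is continuous, dominated convergence lets me swap derivative and expectation to obtain $\tf'(\XX) = \Pr_u[u > -\XX] = 1 - F_\cD(-\XX)$. On the FTRL side, since $\cR$ is strongly convex the maximizer $w^\star(\XX) := \argmax_{w\in[0,1]}\{w\XX - \cR(w)\}$ is unique, so Danskin's theorem yields $(\cR^\star)'(\XX) = w^\star(\XX)$, and at interior optima the first-order condition gives $\XX = \cR'(w^\star)$. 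The theorem therefore boils down to showing $w^\star(\XX) = 1 - F_\cD(-\XX)$; substituting $\XX = \cR'(w)$ this rearranges to $\cR'(w) = -F_\cD^{-1}(1-w)$, and integrating from $0$ to $w$ recovers the stated formula.

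For the forward direction (FTRL $\to$ FTPL), given a strongly convex $\cR$ on $[0,1]$ I would set $G(\XX) := (\cR^\star)'(\XX)$. Then $G$ is continuous, non-decreasing, takes values in $\cX = [0,1]$ (since $\nabla \cR^\star \in \dom(\cR)$), and approaches $0$ as $\XX \to -\infty$ and $1$ as $\XX \to +\infty$. Hence $G$ is a valid CDF, and setting $F_\cD(x) := 1 - G(-x)$ gives another valid CDF, which defines the desired $\cD$. By construction $1 - F_\cD(-\XX) = G(\XX) = (\cR^\star)'(\XX)$, so $\tf$ and $\cR^\star$ have identical derivatives and thus differ by at most an additive constant, absorbed into the choice of $\cR(0)$.

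For the reverse direction (FTPL $\to$ FTRL), given a continuous CDF $F_\cD$ with finite expectation, I would define $\cR$ by the stated integral so that $\cR'(w) = -F_\cD^{-1}(1-w)$. This is non-decreasing in $w$ since $F_\cD^{-1}$ is non-decreasing, so $\cR$ is convex on $[0,1]$; the finite-expectation hypothesis guarantees the integral is well-defined at the endpoint $w = 1$. The interior first-order condition $\cR'(w^\star) = \XX$ then becomes $F_\cD^{-1}(1 - w^\star) = -\XX$, i.e., $w^\star = 1 - F_\cD(-\XX) = \tf'(\XX)$, and the two potentials once again agree up to an additive constant.

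The main obstacle will be the boundary behavior: ensuring that the image of $(\cR^\star)'$ fills all of $[0,1]$ (so that $G$ is a genuine CDF rather than one supported on a strict subinterval) and handling corner maximizers where $w^\star \in \{0,1\}$. These points are addressed by the standard Legendre-type assumption $\cR'(0^+) = -\infty$, $\cR'(1^-) = +\infty$, which corresponds on the dual side to $F_\cD$ being supported on all of $\RR$ (as the theorem already assumes). With these technicalities handled, the derivative computation and the integral inversion are entirely symmetric, which gives the bijection.
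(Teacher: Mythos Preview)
Your proposal is correct and follows essentially the same approach as the paper: compute $\tf'(\XX)=\PP[u>-\XX]$ on the FTPL side, use the first-order condition $\cR'(w^\star)=\XX$ on the FTRL side, match them to get $\cR'(w)=-F_{\cD}^{-1}(1-w)$, and integrate. You are in fact more careful than the paper about two points it glosses over---that it is $(\cR^\star)'$ (after the reflection $x\mapsto -x$) rather than $\cR^\star$ itself that serves as the CDF, and the Legendre-type boundary conditions needed for the image of $(\cR^\star)'$ to exhaust $[0,1]$.
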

\begin{proof} In Appendix \ref{app:ftpl-ftrl-bijecction}.
	\end{proof}
\todo[inline]{To Jake: Please check the theorem (esp. conditions on the distribution and regularizer) and proof carefully}
\section{Online Linear Optimization via Gaussian Smoothing}

\label{sec:gaussian}


Gaussian smoothing is a standard technique for smoothing a function. In computer vision applications, for example, image pixels are viewed as a function of the $(x,y)$-coordinates, and Gaussian smoothing is used to blur noises in the image. We first present basic results on Gaussian smoothing from the optimization literature.
\todo[inline]{Jake: maybe add a sentence on why Gaussian is the standard kernel for convolving a function?}

\begin{definition}[Gaussian smoothing] Let $\f: \RR^N \to \RR$ be a function. Then, we define its Gaussian smoothing, with a scaling parameter $\eta > 0$ and a covariance matrix $\Sigma$, as
\[\tilde{\f}(\XX; \eta, \mathcal{N}(0, \Sigma)) = \EE_{u \sim \mathcal{N}(0, \Sigma)} \f(\XX 
+ \eta u) = (2\pi)^{-\frac{N}{2}} \mathrm{det}(\Sigma)^{-\frac{1}{2}}\int_{\RR^N} \f(\XX + \eta u) e^{-\frac{1}{2} u^T \Sigma^{-1} u} \ du\]
\end{definition}
In this section, when the smoothing parameters are clear from the context, we use a shorthand notation $\tf$. An extremely useful property of Gaussian smoothing is that $\tilde{\f}$ is always twice-differentiable, even when $\f$ is not. The trick is to introduce a new variable $\tilde{\XX} = \XX + \eta u$. After substitutions, the variable $\XX$ only appears in the exponent, which can be safely differentiated.

\begin{lemma}
\label{lem:smoothinggd}
(\citealt[Lemma~2]{Nesterov11}, and \citealt[Section~3]{Bhatnagar07adaptivenewton-based})
Let $\f:\RR^N \to \RR$ be a function. For any positive $\eta$, $\tilde{\f}(\cdot \ ; \eta, \mathcal{N}(0, \Sigma))$ is twice-differentiable and
	\begin{equation}
	\label{eq:gaussian_gradient}
		\nabla \tilde{\f}(\XX; \eta, \mathcal{N}(0, \Sigma)) = \frac{1}{\eta}\EE_u [\f(\XX + \eta u) \Sigma^{-1} u]
	\end{equation}
	\begin{equation}
	\label{eq:gaussian_hessian}
		\nabla^2 \tilde{\f}(\XX; \eta, \mathcal{N}(0, \Sigma)) = \frac{1}{\eta^2}\EE_u \Big[\f(\XX + \eta u) \Big((\Sigma^{-1}u) (\Sigma^{-1} u)^T - \Sigma^{-1} \Big) \Big]	
	\end{equation}
\end{lemma}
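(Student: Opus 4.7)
The plan is exactly the substitution trick sketched in the paragraph preceding the lemma. Writing out the Gaussian smoothing
\[
\tf(\XX) = (2\pi)^{-N/2}\det(\Sigma)^{-1/2}\int_{\RR^N} \f(\XX+\eta u)\, e^{-\frac{1}{2}u^T\Sigma^{-1}u}\, du
\]
and changing variables via $\tx = \XX + \eta u$ (so that $du = \eta^{-N}\, d\tx$) transforms this into
\[
\tf(\XX) = (2\pi)^{-N/2}\det(\Sigma)^{-1/2}\eta^{-N}\int_{\RR^N} \f(\tx)\, e^{-\frac{1}{2\eta^2}(\tx-\XX)^T\Sigma^{-1}(\tx-\XX)}\, d\tx.
\]
The crucial feature is that $\XX$ now appears only inside the exponent, which is $C^{\infty}$ in $\XX$, regardless of whether $\f$ is smooth or even continuous.

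Next I would compute the gradient by differentiating under the integral sign. The gradient in $\XX$ of the exponent is $\eta^{-2}\Sigma^{-1}(\tx - \XX)$, so
\[
\nabla\tf(\XX) = (2\pi)^{-N/2}\det(\Sigma)^{-1/2}\eta^{-N}\!\int\! \f(\tx)\cdot\tfrac{1}{\eta^2}\Sigma^{-1}(\tx - \XX)\cdot e^{-\frac{1}{2\eta^2}(\tx-\XX)^T\Sigma^{-1}(\tx-\XX)}\, d\tx,
\]
and reversing the substitution by setting $u = (\tx-\XX)/\eta$ immediately produces \eqref{eq:gaussian_gradient}. For the Hessian I would differentiate once more, applying the product rule in the integrand: one term differentiates the factor $\Sigma^{-1}(\tx-\XX)$, contributing $-\Sigma^{-1}$, and the other differentiates the exponential, contributing an additional $\eta^{-2}\Sigma^{-1}(\tx-\XX)$ outer-producted with the first factor. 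Summing the two contributions and undoing the substitution one more time yields exactly \eqref{eq:gaussian_hessian}.

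The main technical obstacle, and the real reason the substitution is useful, is justifying the interchange of differentiation and integration. After the change of variables the integrand factors into $\f(\tx)$ (independent of $\XX$) times a Gaussian density in $\XX$ with covariance $\eta^2\Sigma$; its $\XX$-derivatives of any order are polynomials in $\tx-\XX$ times that same Gaussian. Under any mild growth assumption on $\f$ (for instance, $\f$ polynomially bounded, which certainly holds in our application where $\f$ is the Lipschitz support function of a bounded compact set $\cX$), these derivatives admit a local $L^1(d\tx)$-bound that is uniform in $\XX$ on any bounded neighborhood, so the dominated convergence theorem legitimizes both the first and second differentiations. Continuity of the resulting integrals in $\XX$ then upgrades the pointwise statements to twice-differentiability of $\tf$ on all of $\RR^N$.
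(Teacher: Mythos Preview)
Your proposal is correct and follows precisely the substitution trick the paper sketches immediately before the lemma (the paper itself does not give a proof, deferring to the cited references). Your additional paragraph justifying the interchange of differentiation and integration via dominated convergence under a polynomial-growth assumption on $\f$ is a welcome elaboration that the paper omits.
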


If $\f(\XX + \eta u)$ is differentiable almost everywhere, then we can directly differentiate Equation \ref{eq:gaussian_gradient} by swapping the expectation and gradient \citep[Proposition~2.2]{bertsekas1973} and obtain an alternative expression for Hessian:
	\begin{equation}
	\label{eq:gaussian_hessian2}
		\nabla^2 \tilde{\f}(\XX; \eta, \mathcal{N}(0, \Sigma)) = \frac{1}{\eta}\EE_u [\nabla \f\big(\XX + \eta u) (\Sigma^{-1}u)^T].
	\end{equation}
\todo[inline]{I wanted to say: Because the Hessian of a potential function is always symmetric positive semidefinite matrix, this alternative Hessian expression reveals a very interesting symmetry. But the Hessian is symmetric if potential has continuous 2nd derivative everywhere. is that guaranteed to be true?}



\subsection{Experts Setting \texorpdfstring{($\ell_1$-$\ell_\infty$ case)}{l1}}
\label{sec:gaussian_experts}
The experts setting is where $\mathcal{X} = \Delta^N \overset{\text{def}}{=} \{\w \in \RR^N: \sum_{i} \w_i = 1, \w_i \geq 0 \ \forall i\}$, and $\mathcal{Y} = \{\xx \in \RR^N: \|\xx\|_\infty \leq 1\}$. The baseline potential function is $\f(\XX) = \max_{\w \in \mathcal{X}}\langle \w, \XX\rangle = \XX_{i^*(\XX)},$ where we define $\textstyle i^*(z) := \min \{i: i \in \arg\max_{j} z_{j}\}$.

Our regret bound in Theorem \ref{thm:experts_regret} is data-dependent, and it is stronger than the previously known $O(\sqrt{T \log N})$ regret bounds of the algorithms that use similar perturbations. In the game theoretic analysis of Gaussian perturbations by \cite{DBLP:conf/nips/RakhlinSS12}, the algorithm uses the scaling parameter $\eta_t = \sqrt{T - t}$, which  requires the knowledge of $T$ and does not adapt to data. \cite{Devroye:2013} proposed the Prediction by Random Walk (PRW) algorithm, which flips a fair coin every round and decides whether to add $1$ to each coordinate. Due to the discrete nature of the algorithm, the analysis must assume the worst case where $\|\theta_t\|_\star = 1$ for all $t$.

\begin{theorem}
\label{thm:experts_regret}
	Let $\f$ be the baseline potential for the experts setting. The GBPA run with the Gaussian smoothing of $\f$, i.e., $\f_t(\cdot) = \tilde{\f}(\cdot; \eta_t, \mathcal{N}(0, I))$ for all $t$ has  regret at most
	\begin{equation}
		\label{eq:gauss_experts_regret}
		\Regret \leq \sqrt{2\log N} {\textstyle \left(\eta_T + \sum_{t=1}^{T} \frac{1}{\eta_t} \|\theta_t\|_{\infty}^{2} \right)}.
	\end{equation}
		If the algorithm selects $\eta_t = \sqrt{\sum_{t=1}^{T} \|\theta_t\|_{\infty}^{2}}$ for all $t$ (with the help of hindsight), we have
	\[
		\Regret \leq 2 \sqrt{\textstyle 2\sum_{t=1}^{T} \|\theta_t\|_{\infty}^{2} \log N }.
	\]
	If the algorithm selects $\eta_t$ adaptively according to $\eta_t = \sqrt{2(1 + \sum_{s=1}^{t-1} \|\theta_s\|_{\infty}^{2})}$, we have
	\[
		\Regret \leq 4\sqrt{\textstyle (1 + \sum_{t=1}^{T} \|\theta_t\|_{\infty}^{2}) \log N}.
	\]
\end{theorem}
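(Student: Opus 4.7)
My strategy is to invoke the regret decomposition of Lemma~\ref{lem:genericregret1} with $\f_t = \tf(\cdot;\eta_t,\mathcal{N}(0,I))$ and bound each of the three penalty terms separately. The underestimation term is immediately nonpositive: since $\f(\XX) = \max_i \XX_i$ is convex, Jensen's inequality gives $\tf(\XX_T;\eta_T) = \EE[\f(\XX_T + \eta_T u)] \geq \f(\XX_T)$. For the overestimation term I differentiate in the scaling parameter,
\[
\frac{d}{d\eta}\tf(\XX;\eta,\mathcal{N}(0,I)) = \EE[\langle \nabla\f(\XX+\eta u), u\rangle] = \EE[u_{i^*(\XX+\eta u)}] \leq \EE[\max_i u_i] \leq \sqrt{2\log N},
\]
using the classical bound on the expected maximum of $N$ standard Gaussians. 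Adopting the convention $\eta_0 = 0$ (consistent with $\f_0 \equiv \f$), the mean-value theorem gives $\f_t(\XX_{t-1}) - \f_{t-1}(\XX_{t-1}) \leq (\eta_t - \eta_{t-1})\sqrt{2\log N}$, and this telescopes across $t=1,\dots,T$ to $\eta_T\sqrt{2\log N}$.

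The crux of the proof, and the step I expect to be the main obstacle, is to show that $\tf(\cdot;\eta,\mathcal{N}(0,I))$ is $(2\sqrt{2\log N}/\eta)$-strongly smooth with respect to $\|\cdot\|_\infty$. Starting from the Hessian identity \eqref{eq:gaussian_hessian2}, two structural facts specific to $\f = \max$ fall out: (i) the off-diagonal entries satisfy $H_{ij}\leq 0$ for $i\neq j$, since raising $\XX_j$ can only increase $\Pr[i^* = j]$ at the expense of all other indices; and (ii) each column of $H$ sums to zero, because $\nabla\tf(\XX)\in\Delta^N$. Together these force $H_{ii} = \sum_{j\neq i}|H_{ij}|$ and yield the symmetric rearrangement
\[
v^T H v \;=\; \tfrac{1}{2}\sum_{i\neq j}|H_{ij}|(v_i - v_j)^2 \;\leq\; 2\|v\|_\infty^2 \sum_{i\neq j}|H_{ij}| \;=\; 2\|v\|_\infty^2\,\Tr(H),
\]
where the inequality uses the crude bound $(v_i - v_j)^2 \leq 4\|v\|_\infty^2$. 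Integration by parts (Stein's identity) delivers $\Tr(H) = \frac{1}{\eta}\EE[u_{i^*(\XX+\eta u)}] \leq \sqrt{2\log N}/\eta$, the very same Gaussian-maximum bound used in the overestimation step; strong smoothness follows, and Lemma~\ref{lem:jake_vHv} bounds the per-round divergence penalty by $\frac{\sqrt{2\log N}}{\eta_t}\|\xx_t\|_\infty^2$.

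Summing the three bounds produces the master inequality \eqref{eq:gauss_experts_regret}. The first corollary is immediate AM-GM minimization of $\eta + (\sum_t\|\xx_t\|_\infty^2)/\eta$ over constant $\eta$. For the adaptive schedule I use $\|\xx_t\|_\infty \leq 1$ from the experts setting and invoke the standard integral-comparison inequality $\sum_t a_t/\sqrt{1+\sum_{s<t}a_s} \leq 2\sqrt{1+\sum_t a_t}$ with $a_t=\|\xx_t\|_\infty^2$ to control $\sum_t \|\xx_t\|_\infty^2/\eta_t$, while $\eta_T$ is bounded directly by $\sqrt{2(1+\sum_t\|\xx_t\|_\infty^2)}$; routine algebra consolidates the constants into the stated factor of $4$.
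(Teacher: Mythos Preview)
Your proposal is correct and follows essentially the same route as the paper: invoke Lemma~\ref{lem:genericregret1}, kill the underestimation term by Jensen, bound the overestimation by $\eta_T\sqrt{2\log N}$, and control the divergence via the sign/zero-sum structure of the Hessian to reduce everything to $2\Tr(H)\le 2\sqrt{2\log N}/\eta$. The only cosmetic differences are that the paper bounds $v^T H v$ through $\sum_{i,j}|H_{ij}|=2\Tr(H)$ (your Laplacian identity $v^THv=\tfrac12\sum_{i\neq j}|H_{ij}|(v_i-v_j)^2$ is an equivalent repackaging of the same two observations), and the paper handles the time-varying overestimation via sublinearity of $\f$ (Lemma~\ref{lem:changingf}) rather than your $\eta$-derivative, but both arguments telescope to the same $\eta_T\sqrt{2\log N}$.
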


\begin{proof}
	In order to apply Lemma \ref{lem:genericregret1}, we need to upper bound (i) the overestimation and underestimation penalty, and (ii) the Bregman divergence. To bound (i), first note that due to convexity of $\f$, the smoothed potential $\tf$ is also convex and upper bounds the baseline potential. 
	Hence, the underestimation penalty is at most 0, and when $\eta_t$ is fixed for all $t$, it is straightforward to bound the overestimation penalty:
		\begin{equation}
		\label{eq:maxgauss}
			\f_T(0) - \f(0) \leq \EE_{u \sim \mathcal{N}(0, I)}[\f(\eta_T u)] \leq \eta_T \sqrt{2\log N}.
		\end{equation}
	The first inequality is the triangle inequality. The second inequality is a well-known result and we included the proof in Appendix \ref{app:maxgauss} for completeness. For the adaptive $\eta_t$, we apply Lemma \ref{lem:changingf}, which we prove at the end of this section, to get the same bound.

	It now remains to bound the Bregman divergence. This is achieved in Lemma~\ref{lem:experts_hessian} where we upper bound $\sum_{i, j} |\nabla^2_{ij}\f|$, which is an upper bound on $\max_{\xx: \|\xx\|_\infty = 1} \xx^T (\nabla^2\f) \xx$. The final step is to apply Lemma \ref{lem:jake_vHv}. 
\end{proof}

The proof of Theorem \ref{thm:experts_regret} shows that for the experts setting, the Gaussian smoothing is an $\eta$-smoothing with parameters $(O(\sqrt{\log N}), O(\sqrt{\log N}), \|\cdot\|)$. This is in contrast to the Hedge Algorithm \citep{Freund1997119}, which is an $\eta$-smoothing with parameters $(\log N, 1, \|\cdot\|)$ (See Section \ref{sec:infconv} for details). Interestingly, the two algorithms obtain the same optimal regret (up to constant factors) although they have different smoothing parameters.

\todo[inline]{check the above paragraph}
\todo[inline,color=black!20]{Any implication? interpretation? Why one would prefer one kind of tradeoff over the other?}

	\begin{lemma}
	\label{lem:experts_hessian}
	Let $\Phi$ be the baseline potential for the experts setting. Let the Hessian matrix of the Gaussian-smoothed baseline potential be denoted $H$, i.e., $H = \nabla^2\tilde{\f} (\XX; \eta, \mathcal{N}(0,I))$. Then,
	\[\sum_{i,j} |H_{ij}| \leq \frac{2\sqrt{2 \log N}}{\eta}.\]
	\end{lemma}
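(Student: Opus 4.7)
My plan is to exploit the sign structure of the Hessian $H$ to collapse $\sum_{i,j}|H_{ij}|$ to $2\,\Tr(H)$, and then bound the trace by a standard Gaussian maximum inequality. To set this up, I would invoke the alternative Hessian formula (Equation~\ref{eq:gaussian_hessian2}) together with the fact that $\f(y)=\max_i y_i$ is differentiable almost everywhere with $\nabla_i\f(y)=\ind\{i = i^*(y)\}$, giving
\[H_{ij} \;=\; \frac{1}{\eta}\,\EE_u\bigl[\ind\{i = i^*(\XX+\eta u)\}\, u_j\bigr].\]
Equivalently, setting $p_i(\XX) := \PP_u[i^*(\XX+\eta u) = i]$, one has $\nabla_i \tf(\XX) = p_i(\XX)$ and hence $H_{ij} = \partial_{\XX_j} p_i(\XX)$.

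Next I would establish the sign pattern $H_{ii}\geq 0$ and $H_{ij}\leq 0$ for $i\neq j$. The diagonal is non-negative because $H$ is PSD (Hessian of convex $\tf$). For off-diagonals, increasing $\XX_j$ by any $\delta>0$ strictly tightens the constraint $\XX_i+\eta u_i > \XX_j + \eta u_j$, so the event $\{i^*(\XX+\eta u)=i\}$ shrinks and $p_i$ is non-increasing in $\XX_j$, yielding $H_{ij}\leq 0$. Shift-invariance of the max ($\tf(\XX+c\mathbf{1})=\tf(\XX)+c$) forces $H\mathbf{1}=0$, i.e.\ $\sum_j H_{ij}=0$. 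Combined with the sign pattern, this gives $\sum_{j\neq i}|H_{ij}| = -\sum_{j\neq i}H_{ij} = H_{ii}$, so $\sum_j|H_{ij}| = 2H_{ii}$, and summing in $i$,
\[\sum_{i,j}|H_{ij}| \;=\; 2\,\Tr(H) \;=\; \frac{2}{\eta}\,\EE_u\!\left[\sum_i \ind\{i = i^*(\XX+\eta u)\}\,u_i\right] \;=\; \frac{2}{\eta}\,\EE_u\bigl[u_{i^*(\XX+\eta u)}\bigr].\]

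To finish, I would use that $u_{i^*(\XX+\eta u)}$ is one of the coordinates of $u$, so $u_{i^*(\XX+\eta u)}\leq \max_i u_i$ pointwise; the classical inequality $\EE[\max_i u_i]\leq\sqrt{2\log N}$ for $u\sim\mathcal{N}(0,I_N)$ then delivers the claimed $\frac{2\sqrt{2\log N}}{\eta}$. The main obstacle is the off-diagonal sign claim: PSD-ness alone does not imply non-positive off-diagonals, so one really needs the structural fact that the event $\{i=i^*(\XX+\eta u)\}$ shrinks monotonically in each competing coordinate $\XX_j$. Once that is in place, the remaining steps are linear bookkeeping plus the standard Gaussian-maximum tail bound.
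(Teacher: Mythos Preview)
Your proposal is correct and follows essentially the same route as the paper: establish the sign pattern of $H$, use the fact that the entries sum to zero to reduce $\sum_{i,j}|H_{ij}|$ to $2\,\Tr(H)$, and then bound the trace by $\EE[\max_i u_i]\leq\sqrt{2\log N}$. The only cosmetic differences are that the paper argues the sign pattern via a coupling on $u_j\leftrightarrow -u_j$ (rather than your monotonicity-of-$p_i$ / PSD observations) and derives $\sum_{i,j}H_{ij}=0$ directly from $\EE[\sum_j u_j]=0$ rather than the stronger row-sum identity $H\mathbf{1}=0$ you obtain from shift-invariance.
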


	\begin{proof}
	With probability one, $\f(\Theta + \eta u)$ is differentiable and from Lemma \ref{lem:smoothinggd}, we can write
	\[H = \frac{1}{\eta}\EE[ \nabla \f(\XX + \eta u) u^T] = \EE[e_{i^*(\eta u + \XX)}u^T],\] where $e_i \in \RR^N$ is the $i$-th standard basis vector.

	First, we note that all off-diagonals of $H$ are negative and all diagonal entries in $H$ are positive. 
	This is because the Hessian matrix is the covariance matrix between the probability that $i$-th coordinate is the maximum and the extra random Gaussian noise added to the $j$-th coordinate; 
	for any positive number $\alpha$, $u_j = \alpha$ and $u_j = -\alpha$ have the same probability, but the indicator for $i = i^*$ has a higher probability to be 1 when $u_i$ is positive (hence $H_{ii} >0$) and $u_j$ is negative for $i \neq j$ (hence $H_{ij} <0$ for $i \neq j$).

	Second, the entries of $H$ sum up to 0, as
	\[\sum_{i,j} H_{ij} = \frac{1}{\eta}\EE\left[\textstyle \sum_j u_j \sum_i \ind\{i = i^*(\XX + u) \} \right] = \frac{1}{\eta}\EE\left[\textstyle \sum_j u_j\right] = 0.\]

	Combining the two observations, we have
	\[\sum_{i,j} |H_{ij}| = \sum_{i,j:H_{ij} > 0} H_{ij} +\sum_{i,j:H_{ij} < 0} -H_{ij}  =  2\sum_{i,j:H_{ij} > 0} H_{ij} = 2 \Tr(H)\].

	Finally, the trace is bounded as follows:
	\begin{align*}
	\Tr(H) =  \frac{1}{\eta}\EE\Big[\sum_i u_i \ind\{ i = i^*(\XX + u)\}\Big] 
	&\leq\frac{1}{\eta} \EE\Big[(\max_k u_k) \sum_i \ind\{i = i^*(\XX + u)\}\Big] \\
	&= \frac{1}{\eta}\EE[\max_k u_k] \leq \frac{1}{\eta}\sqrt{2\log N},
	\end{align*}
	where the final inequality is shown in Appendix~\ref{app:maxgauss}. Multiplying both sides by 2 completes the proof.
	\end{proof}

\paragraph{Time-Varying Scaling Parameters} When the scaling parameter $\eta_t$ changes every iteration, the overestimation penalty becomes a sum of $T$ terms. The following lemma shows that using the sublinearity of the baseline potential, we can collapse them into one.

\begin{lemma}
\label{lem:changingf}
Let $\f:\RR^N \to \RR$ be a sublinear function, and $\mathcal{D}$ be a continuous distribution with the support $\RR^N$.
Let $\f_t(\Theta) = \tilde{\f}(\Theta; \eta_t, \cD)$ for $t=0, \ldots, T$ and choose $\eta_t$ to be a non-decreasing sequence of non-negative numbers ($\eta_0 = 0$ so that $\f_0 = \f$). Then, the overestimation penalty in Equation \ref{eq:gbparegret} has the following upper bound:
\[
\sum_{t=1}^{T} \f_t(\Theta_{t-1}) - \f_{t-1}(\Theta_{t-1}) \leq \eta_T \EE_{u \sim \mathcal{D}}[\f(u)].
\]
\end{lemma}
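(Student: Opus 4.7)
The plan is to exploit the two ingredients of sublinearity separately: positive homogeneity lets us pull the increment $\eta_t - \eta_{t-1}$ outside $\f$, and subadditivity lets us split a sum inside $\f$. Applied together, they give a pointwise inequality that collapses the overestimation sum into a telescoping series.

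First I would establish the per-step bound $\f_t(\Theta) - \f_{t-1}(\Theta) \le (\eta_t - \eta_{t-1})\,\EE_{u\sim\cD}[\f(u)]$ for any $\Theta$. The key identity is
\[
\Theta + \eta_t u \;=\; (\Theta + \eta_{t-1}u) + (\eta_t - \eta_{t-1})u,
\]
which is legitimate because $\eta_t \ge \eta_{t-1} \ge 0$. Subadditivity of $\f$ yields
\[
\f(\Theta + \eta_t u) \;\le\; \f(\Theta + \eta_{t-1}u) + \f\bigl((\eta_t - \eta_{t-1})u\bigr),
\]
and then positive homogeneity (applicable since $\eta_t - \eta_{t-1} \ge 0$) gives $\f((\eta_t-\eta_{t-1})u) = (\eta_t - \eta_{t-1})\f(u)$. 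Taking expectations over $u \sim \cD$ on both sides produces the claimed per-step bound.

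Next I would sum over $t = 1,\ldots,T$ at the points $\Theta = \Theta_{t-1}$:
\[
\sum_{t=1}^{T} \bigl(\f_t(\Theta_{t-1}) - \f_{t-1}(\Theta_{t-1})\bigr) \;\le\; \Bigl(\sum_{t=1}^{T}(\eta_t - \eta_{t-1})\Bigr)\,\EE_{u\sim\cD}[\f(u)].
\]
The inner sum telescopes to $\eta_T - \eta_0 = \eta_T$, using the stated convention $\eta_0 = 0$; this also matches the base case $\f_0 \equiv \f$ since $\tf(\Theta; 0, \cD) = \f(\Theta)$.

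There is no serious obstacle: the only subtlety is checking that the decomposition step is valid, i.e., that the scalar $\eta_t - \eta_{t-1}$ is non-negative so both positive homogeneity and subadditivity apply in the right direction. Monotonicity of $\eta_t$ is exactly what is needed; without it, positive homogeneity of a sublinear function does not let one pass the scalar through $\f$, and the argument would break down. The support condition on $\cD$ and the continuity of $\f$ (sublinear functions on $\RR^N$ are continuous) ensure that the expectations are well defined, so no measurability fuss arises.
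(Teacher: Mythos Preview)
Your argument is correct and is essentially the same as the paper's: the paper also uses subadditivity to get $\f(\Theta+\eta_t u)-\f(\Theta+\eta_{t-1}u)\le \f((\eta_t-\eta_{t-1})u)$ and then positive homogeneity, after which the telescoping is immediate. If anything, your write-up is more careful, since you make the telescoping explicit and state the lemma for a general distribution $\cD$ rather than the Gaussian that slips into the paper's appendix.
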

\begin{proof} See Appendix \ref{app:changingf}
\end{proof}

\subsection{Online Linear Optimization over Euclidean Balls \texorpdfstring{($\ell_2$-$\ell_2$ case)}{l2l2}}
\label{sec:gauss:l2l2}

The Euclidean balls setting is where $\mathcal{X} = \mathcal{Y} = \{x \in \RR^N: \|x\|_2 \leq 1\}$. The baseline potential function is $\f(\XX) = \max_{\w \in \cX}\langle \w, \XX \rangle =  \|\XX\|_2$. We show that the GBPA with Gaussian smoothing achieves a minimax optimal regret \citep{DBLP:conf/colt/AbernethyBRT08} up to a constant factor.

\begin{theorem}
\label{thm:l2l2_regret}
Let $\f$ be the baseline potential for the Euclidean balls setting. The GBPA run with $\f_t(\cdot) = \tilde{\f}( \cdot ; \eta, \mathcal{N}(0, I))$ for all $t$ has regret at most
\begin{equation} \textstyle
\label{eq:gauss_l2l2_regret}
\Regret \leq \eta_T \sqrt{N} + \frac{1}{2\sqrt{N}}\sum_{t=1}^{T} \frac{1}{\eta_t} \|\xx_t\|_{2}^{2}.
\end{equation}
		If the algorithm selects $\eta_t = \sqrt{\sum_{s=1}^{T} \|\xx_s\|_{2}^{2} / (2N)}$ for all $t$ (with the help of hindsight), we have
	\[
		\Regret \leq \sqrt{\textstyle 2\sum_{t=1}^{T} \|\xx_t\|_{2}^{2}}.
	\]
	If the algorithm selects $\eta_t$ adaptively according to $\eta_t = \sqrt{(1 + \sum_{s=1}^{t-1}\|\xx_s\|_{2}^{2})) / N}$, we have
	\[
		\Regret \leq 2\sqrt{\textstyle 1 + \sum_{t=1}^{T} \|\xx_t\|_{2}^{2}}
	\]
\end{theorem}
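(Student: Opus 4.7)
The plan is to mirror the three-step proof of Theorem \ref{thm:experts_regret}: apply Lemma \ref{lem:genericregret1}, separately control the underestimation, overestimation, and divergence penalties, and then tune $\eta_t$. Since $\Phi(\Theta) = \|\Theta\|_2$ is convex, Jensen's inequality yields $\tilde{\Phi}(\Theta) \geq \Phi(\Theta)$, so the underestimation penalty is non-positive. For the overestimation penalty in the fixed-$\eta$ case I would compute
\[
\tilde{\Phi}(0) - \Phi(0) \;=\; \eta\, \EE_{u \sim \mathcal{N}(0,I)}\|u\|_2 \;\leq\; \eta\sqrt{\EE \|u\|_2^2} \;=\; \eta\sqrt{N}
\]
by Jensen applied to $\sqrt{\cdot}$; for time-varying $\eta_t$, I would invoke Lemma \ref{lem:changingf} (which applies since $\|\cdot\|_2$ is sublinear) to collapse the telescoping sum into $\eta_T\sqrt{N}$.

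The core of the proof is bounding the divergence penalty. By Lemma \ref{lem:jake_vHv}, it suffices to show
\[
\theta^T \nabla^2 \tilde{\Phi}(\Xi)\, \theta \;\leq\; \frac{\|\theta\|_2^2}{\eta\sqrt{N}}
\]
uniformly for $\Xi \in \RR^N$. Since $\Phi$ is $C^2$ off the origin with $\nabla^2 \Phi(x) = \|x\|_2^{-1}(I - \hat{x}\hat{x}^T)$ and this singularity is integrable against the Gaussian density whenever $N \geq 2$, I would differentiate under the expectation to get
\[
\theta^T \nabla^2 \tilde{\Phi}(\Xi)\, \theta \;=\; \EE_u\!\left[\frac{\|\theta\|_2^2 - (\hat{v}^T \theta)^2}{\|\Xi + \eta u\|_2}\right], \qquad \hat{v} := \frac{\Xi + \eta u}{\|\Xi + \eta u\|_2}.
\]
At $\Xi = 0$, rotational invariance of the Gaussian makes $\hat{v}$ uniform on the unit sphere and independent of $\|u\|_2$, so the expectation factors as $\|\theta\|_2^2 \cdot \tfrac{N-1}{N} \cdot \tfrac{1}{\eta}\,\EE[1/\|u\|_2]$; the standard chi-distribution identity $\EE[1/\|u\|_2] = \Gamma((N-1)/2)/(\sqrt{2}\,\Gamma(N/2))$ together with Gautschi's inequality then delivers the target $\|\theta\|_2^2/(\eta\sqrt{N})$ bound.

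The main obstacle will be extending this Hessian bound uniformly in $\Xi$: one has to argue that translating the Gaussian away from the origin only decreases $\EE[1/\|\Xi + \eta u\|_2]$, which I would handle either via a direct non-central chi computation or via a symmetrization/coupling argument exploiting the convexity of $1/\|\cdot\|_2$ along rays from the origin. Once the strong smoothness bound is in place, substitution into Equation \ref{eq:gbparegret} gives Equation \ref{eq:gauss_l2l2_regret}; the hindsight and adaptive tunings of $\eta_t$ then follow from one-variable calculus and the usual inductive bound on $\sum_t \|\theta_t\|_2^2/\eta_t$, respectively.
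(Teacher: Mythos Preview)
Your overall architecture matches the paper's exactly: decompose via Lemma~\ref{lem:genericregret1}, kill the underestimation term by convexity, bound the overestimation term by $\eta_T\sqrt{N}$ (triangle inequality plus Jensen on $\sqrt{\cdot}$, with Lemma~\ref{lem:changingf} for the time-varying case), and reduce the divergence term to a uniform bound on the maximum eigenvalue of $\nabla^2\tilde{\Phi}$ via Lemma~\ref{lem:jake_vHv}. At the origin your computation is also fine, just packaged differently: you differentiate under the integral to get $\EE[\nabla^2\Phi(\eta u)]$ and exploit the independence of $\|u\|$ and $u/\|u\|$, whereas the paper uses the Nesterov-style identity $\nabla^2\tilde{\Phi}(0)=\eta^{-1}\EE[\|u\|(uu^T-I)]$ and computes the trace via chi-squared moments; both routes land on the same Gamma-ratio inequality (your Gautschi step is the same as the paper's log-convexity step).

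The gap is in your plan for the ``origin is worst case'' part. Bounding $\theta^T\nabla^2\tilde{\Phi}(\Xi)\theta \le \|\theta\|_2^2\,\EE[1/\|\Xi+\eta u\|_2]$ and then showing the latter is maximized at $\Xi=0$ does \emph{not} deliver $\|\theta\|_2^2/(\eta\sqrt{N})$: you have thrown away the angular factor $(N-1)/N$, and by Jensen $\EE[\|u\|_2^{-1}] > 1/\EE[\|u\|_2] \ge 1/\sqrt{N}$ strictly (for $N=2$ it is $\sqrt{\pi/2}\approx 1.25$). So the crude route can at best give $1/(\eta\sqrt{N-2})$-type constants, not the claimed $1/(\eta\sqrt{N})$. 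The paper avoids this by never passing through the crude bound: it observes that by rotational symmetry the gradient has the form $\nabla\tilde{\Phi}(\Xi)=\alpha(\|\Xi\|)\,\Xi$, so the Hessian has exactly two eigenvalues, $\alpha(\|\Xi\|)$ and $\alpha(\|\Xi\|)+\|\Xi\|\alpha'(\|\Xi\|)$, and then proves directly (by an explicit one-variable monotonicity calculation) that $\alpha$ is decreasing, making $\alpha(0)$ the global maximum eigenvalue. If you want to keep your $\EE[\nabla^2\Phi(\Xi+\eta u)]$ viewpoint you would need an argument that the \emph{full} operator norm, not just $\EE[1/\|\Xi+\eta u\|_2]$, is maximized at the origin.
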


\begin{proof} The proof is mostly similar to that of Theorem \ref{thm:experts_regret}. 	In order to apply Lemma \ref{lem:genericregret1}, we need to upper bound (i) the overestimation and underestimation penalty, and (ii) the Bregman divergence.

The Gaussian smoothing always overestimates a convex function, so it suffices to bound the overestimation penalty. Furthermore, it suffices to consider the fixed $\eta_t$ case due to Lemma \ref{lem:jake_vHv}.
The overestimation penalty can be upper-bounded as follows:
\begin{align*}
\f_T(0) - \f(0) = \EE_{u \sim \STDN} \|\XX+\eta_T u\|_2 - \|\XX\|_2 &\leq \eta_T\EE_{u \sim \STDN}\|u\|_2  \\
&\leq \eta_T \sqrt{\EE_{u \sim \STDN}\|u\|_2^2} = \eta_T\sqrt{N}	
\end{align*}
The first inequality is from the triangle inequality, and the second inequality is from the concavity of the square root. 

For the divergence penalty, note that the upper bound on $\max_{v: \|\xx\|_2 = 1} \xx^T (\nabla^2\tf) \xx$ is exactly the maximum eigenvalue of the Hessian, which we bound in Lemma \ref{thm:hessian_l2}. The final step is to apply Lemma \ref{lem:jake_vHv}.
\end{proof}

\begin{lemma}
\label{thm:hessian_l2} Let $\f$ be the baseline potential for the Euclidean balls setting. Then, for all $\Theta \in \RR^N$ and $\eta >0$, the Hessian matrix of the Gaussian smoothed potential satisfies
\[\textstyle \nabla^2 \tilde{\f} (\XX; \eta, \mathcal{N}(0,I)) \preceq \frac{1}{\eta\sqrt{N}}I. \]
\end{lemma}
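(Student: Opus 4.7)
My plan is to exploit the rotational symmetry inherited from $\Phi(\Theta)=\|\Theta\|_2$ and the isotropic Gaussian density to reduce the $N$-dimensional Hessian bound to a pair of one-variable inequalities. Because both $\Phi$ and $\mathcal{N}(0,I)$ are rotationally invariant, $\tilde{\Phi}(\Theta)$ depends only on $\|\Theta\|_2$; writing $\tilde{\Phi}(\Theta)=\eta\, h(\|\Theta\|_2/\eta)$ with $h(r)=\mathbb{E}_{u\sim\mathcal{N}(0,I_N)}[\|re_1+u\|_2]$, the Hessian at any point with $\|\Theta\|_2=\rho$ has exactly one radial eigenvalue equal to $h''(\rho/\eta)/\eta$ and $N-1$ tangential eigenvalues equal to $h'(\rho/\eta)/\rho$. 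The lemma therefore reduces to showing $h''(r)\le 1/\sqrt{N}$ and $h'(r)/r\le 1/\sqrt{N}$ for every $r\ge 0$.

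For the radial eigenvalue I would split $u=(u_1,u_\perp)$ and set $Z_1=r+u_1\sim\mathcal{N}(r,1)$ and $R=\|u_\perp\|_2$, which is independent of $Z_1$. Differentiating under the integral gives $h'(r)=\mathbb{E}[Z_1/\sqrt{Z_1^2+R^2}]$ and $h''(r)=\mathbb{E}[R^2/(Z_1^2+R^2)^{3/2}]$. For each fixed $R>0$ the integrand of $h''$ is strictly decreasing in $Z_1^2$, and an elementary CDF check shows that the folded normal $|Z_1|$ is stochastically nondecreasing in $r\ge 0$; conditioning on $R$ and then averaging yields $h''(r)\le h''(0)$. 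For the tangential direction I would use $h'(0)=0$ (by symmetry) to write $h'(r)=\int_0^r h''(s)\,ds$, so that $h'(r)/r$ is the running average of the nonincreasing function $h''$; since any running average of a nonincreasing function is itself nonincreasing, $h'(r)/r\le\lim_{s\to 0^+}h'(s)/s=h''(0)$.

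It then remains to estimate $h''(0)$. At $r=0$ the Hessian is isotropic by full rotational symmetry, so $h''(0)$ equals $1/N$ times its trace. From the closed form $\nabla^2 g(\mu)=\mathbb{E}[(I-\hat Z\hat Z^T)/\|Z\|_2]$ with $Z\sim\mathcal{N}(\mu,I)$ (where $g(\mu):=h(\|\mu\|_2)$), the trace at $\mu=0$ is $(N-1)\,\mathbb{E}[1/\|u\|_2]$, and the chi-distribution moment gives $\mathbb{E}[1/\|u\|_2]=\Gamma((N-1)/2)/(\sqrt{2}\,\Gamma(N/2))$. After simplification $h''(0)\le 1/\sqrt{N}$ is equivalent to $\Gamma((N+1)/2)/\Gamma(N/2)\le\sqrt{N/2}$, which is immediate from log-convexity of $\Gamma$ applied at $x=N/2$: $\Gamma(x+\tfrac12)^2\le\Gamma(x)\,\Gamma(x+1)=x\,\Gamma(x)^2$.

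The main obstacle I anticipate is the monotonicity of $h''$. Cruder attempts, such as bounding $v^\top\nabla^2\tilde{\Phi}v$ by $\mathbb{E}[1/\|Z\|_2]$ alone, are too weak, since $\mathbb{E}[1/\|Z\|_2]$ already exceeds $1/(\eta\sqrt{N-1})$ at $\Theta=0$; only the extra $(N-1)/N$ factor coming from the isotropy of $\nabla^2\tilde{\Phi}(0)$ brings the bound down to $1/(\eta\sqrt{N})$. The stochastic-dominance argument is what preserves that factor for all $r$; once it is in hand, the running-average step and the Gamma-ratio inequality are each short calculations.
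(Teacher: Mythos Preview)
Your proof is correct and follows the same high-level strategy as the paper: use rotational symmetry to reduce the Hessian to a radial eigenvalue and a tangential eigenvalue, show that both are maximized at the origin, and then bound the common value at the origin via the Gamma-ratio inequality $\Gamma\!\left(\frac{N+1}{2}\right)\le\sqrt{\tfrac{N}{2}}\,\Gamma\!\left(\tfrac{N}{2}\right)$ from log-convexity. The endpoint computation is essentially identical, differing only in which Hessian representation is used (the paper takes $\nabla^2\tilde\Phi(0)=\tfrac{1}{\eta}\,\EE[\|u\|_2(uu^T-I)]$ from the Gaussian integration-by-parts formula, you take $\EE[(I-\hat Z\hat Z^T)/\|Z\|_2]$ from direct differentiation of $\|\cdot\|_2$); both collapse to the same trace $\sqrt{2}\,\Gamma((N+1)/2)/\Gamma(N/2)$.

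Where you genuinely diverge from the paper is in the monotonicity step. The paper writes $\nabla\tilde\Phi(\Theta)=\alpha(\|\Theta\|)\,\Theta$, pairs $u_1$ with $-u_1$ to express $\alpha(t)$ through $g(t)=\bigl(\sqrt{(t+a)^2+b^2}-\sqrt{(t-a)^2+b^2}\bigr)/t$, and then verifies $g'(t)<0$ by an explicit algebraic computation of the sign of a difference of radicals. Your route replaces this calculus with a probabilistic argument: you observe that $h''(r)=\EE\!\bigl[R^2/(Z_1^2+R^2)^{3/2}\bigr]$ is a decreasing function of $|Z_1|$ for each fixed $R$, and then invoke the stochastic monotonicity of the folded normal $|Z_1|$ in its mean $r$ to conclude $h''$ is nonincreasing. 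The tangential bound $h'(r)/r\le h''(0)$ then follows immediately from $h'(r)=\int_0^r h''\le r\,h''(0)$, so the ``running average'' refinement you mention is not even needed. This argument is shorter and more conceptual than the paper's derivative computation, and it makes transparent why the origin is the worst case without any case analysis on signs. The paper's approach, on the other hand, yields slightly more: it shows the radial eigenvalue is always dominated by the tangential one (since $\alpha'<0$), whereas you bound each eigenvalue by $h''(0)$ separately.
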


\begin{proof}
The Hessian of the Euclidean norm 
$\nabla^2\Phi(\XX) = \|\XX\|_2^{-1}I - \|\XX\|_2^{-3}\XX \XX^T$
diverges near $\XX = 0$. Expectedly, the maximum curvature is at origin even after Gaussian smoothing (See Appendix \ref{app:gaussian-l2worst}). So, it suffices to prove
\[\textstyle \nabla^2 \f(0) =\EE_{u \sim \mathcal{N}(0, I)}[\|u\|_2(uu^T - I)] \preceq \sqrt{\frac{1}{N}}I,\]
where the Hessian expression is from Equation \ref{eq:gaussian_hessian2}.
\todo[inline,color=black!20]{Chansoo to Everyone: Shouldn't this actually be obvious? the worst curvature point should remain the same because Gaussian smoothing will always average out the curvature using a symmetric distribution.}

 By symmetry, all off-diagonal elements of the Hessian are 0. Let $Y = \|u\|^2$, which is Chi-squared with $N$ degrees of freedom. So, 
\[
	\Tr( \EE[ \|u\|_2 (uu^T - I) ] ) = \EE[ \Tr(\|u\|_2 (uu^T - I)) ] = \EE[ \|u\|_2^3 - N\|u\|_2 ] = \EE[Y^{\frac{3}{2}}] - N \EE[Y^{\frac{1}{2}}]
\]
Using the Chi-Squared moment formula \cite[p. 20]{Harvey1965}, the above becomes:
\begin{equation}
\label{eq:chisq}
	\frac{2^{\frac{3}{2}} \Gamma(\frac{3}{2} + \frac{N}{2})}{\Gamma(\frac{N}{2})} - \frac{N 2^{\frac{1}{2}}\Gamma(\frac{1}{2} + \frac{N}{2})}{\Gamma(\frac{N}{2})} = \frac{\sqrt{2} \Gamma(\frac{1}{2} + \frac{N}{2})}{\Gamma(\frac{N}{2})}.
\end{equation}
From the log-convexity of the Gamma function,
\[\textstyle \log \Gamma\left(\frac{1}{2} + \frac{N}{2}\right) \leq
 \frac{1}{2}\left(\log \Gamma \left(\frac{N}{2}\right) + \log \Gamma\left(\frac{N}{2} + 1\right) \right) 
 = \log \Gamma\left(\frac{N}{2}\right)\sqrt{\frac{N}{2}} .\]
Exponentiating both sides, we obtain
\[
	\textstyle \Gamma\left(\frac{1}{2} + \frac{N}{2}\right) \leq \Gamma\left(\frac{N}{2}\right) \sqrt{\frac{N}{2}},
\]
which we apply to Equation \ref{eq:chisq} and get $\Tr (\nabla^2 \f(0)) \leq \sqrt{N}$. To complete the proof, note that by symmetry, each entry must have the same expected value, and hence it is bounded by $\sqrt{1 / N}$.
\end{proof}

\subsection{General Bound}
\label{sec:gaussian_general}
In this section, we will use a generic property of Gaussian smoothing to derive a regret bound that holds for any arbitrary online linear optimization problem.
	\begin{lemma}\cite[Lemma~E.2]{Duchi:SIAM2012}
	\label{lem:Duchi}
	Let $\f$ be a real-valued convex function on a closed domain which is a subset of $\RR^N$. Suppose $\f$ is $L$-Lipschitz with respect to $\|\cdot\|_2$, and let $\hat\f_\eta$ be the Gaussian smoothing of $\f$ with the scaling parameter $\eta$ and identity covariance. Then, $\{\hat\f_\eta\}$ is an $\eta$-smoothing of $\f$ with parameters $(L\sqrt{N}, L, \|\cdot\|_2)$.
	\end{lemma}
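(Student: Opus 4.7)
The plan is to verify the two conditions of Definition~\ref{def:smoothable}. First I would handle the deviation bounds. Since $\f$ is convex and $u$ has mean zero, Jensen's inequality gives
\[
\hat\f_\eta(\XX) = \EE_{u\sim\STDN}[\f(\XX+\eta u)] \geq \f(\XX),
\]
so the underestimation bound is trivially $\alpha_1 = 0$. For the overestimation direction, the $L$-Lipschitz property yields
\[
\hat\f_\eta(\XX) - \f(\XX) = \EE[\f(\XX+\eta u)-\f(\XX)] \leq L\eta\,\EE\|u\|_2 \leq L\eta\sqrt{\EE\|u\|_2^2} = L\eta\sqrt{N},
\]
where the second step is Jensen applied to the concave square-root. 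Thus $\alpha_2 = L\sqrt{N}$ and $\alpha = \alpha_1+\alpha_2 = L\sqrt{N}$, matching the target.

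The main obstacle is the smoothness bound, because the most obvious route — bounding the Hessian operator norm by $\|(1/\eta)\EE[\nabla\f\, u^T]\|_{op} \leq (L/\eta)\EE\|u\|_2$ — introduces a spurious $\sqrt{N}$ factor. The right maneuver is to bound the gradient Lipschitz constant directly, using the primal gradient expression from Equation~\ref{eq:gaussian_gradient} (which avoids requiring differentiability of $\f$):
\[
\nabla\hat\f_\eta(x) - \nabla\hat\f_\eta(y) = \tfrac{1}{\eta}\EE\big[(\f(x+\eta u)-\f(y+\eta u))\,u\big].
\]
Projecting onto an arbitrary unit vector $v\in\RR^N$ and applying Cauchy--Schwarz \emph{inside} the expectation gives
\[
\bigl|v^{\!\top}[\nabla\hat\f_\eta(x)-\nabla\hat\f_\eta(y)]\bigr|
\leq \tfrac{1}{\eta}\sqrt{\EE\big[(\f(x+\eta u)-\f(y+\eta u))^2\big]}\cdot\sqrt{\EE[(v^{\!\top}u)^2]}
\leq \tfrac{L}{\eta}\|x-y\|_2,
\]
since the Lipschitz property bounds the first factor by $L\|x-y\|_2$ and the second factor is exactly $\|v\|_2 = 1$ because $v^{\!\top}u\sim\mathcal{N}(0,1)$. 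The key point is that the unit-norm projection replaces the dimension-dependent $\EE\|u\|_2\leq\sqrt{N}$ by the dimension-free $\EE[(v^{\!\top}u)^2]=1$.

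Taking the supremum over unit $v$ gives $\|\nabla\hat\f_\eta(x)-\nabla\hat\f_\eta(y)\|_2 \leq (L/\eta)\|x-y\|_2$, i.e.\ $\hat\f_\eta$ is $(L/\eta)$-strongly smooth with respect to $\|\cdot\|_2$, establishing the smoothness parameter $\beta = L$. Combined with $\alpha = L\sqrt{N}$, this proves $\{\hat\f_\eta\}$ is an $\eta$-smoothing with the claimed parameters $(L\sqrt{N}, L, \|\cdot\|_2)$.
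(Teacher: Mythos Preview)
The paper does not actually supply its own proof of this lemma; it is quoted verbatim as \cite[Lemma~E.2]{Duchi:SIAM2012} and then immediately applied. So there is nothing in the paper to compare against.

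Your argument is correct. The deviation bound is handled exactly as one would expect (Jensen for the lower bound, Lipschitz plus $\EE\|u\|_2\leq\sqrt{N}$ for the upper bound). For the smoothness bound, your key observation --- that projecting $\nabla\hat\f_\eta(x)-\nabla\hat\f_\eta(y)$ onto a unit vector $v$ and then applying the $L^2$ Cauchy--Schwarz inequality $|\EE[XY]|\leq\sqrt{\EE X^2}\sqrt{\EE Y^2}$ with $X=\f(x+\eta u)-\f(y+\eta u)$ and $Y=v^{\!\top}u$ --- is exactly the right move, and it is the mechanism by which the dimension factor disappears: $\EE[(v^{\!\top}u)^2]=1$ rather than $\EE\|u\|_2^2=N$. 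One minor phrasing note: you write ``Cauchy--Schwarz \emph{inside} the expectation,'' but what you actually use is Cauchy--Schwarz \emph{for} the expectation viewed as an inner product on random variables; the displayed inequality is nonetheless correct.
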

Consider an instance of online linear optimization with decision set $\cX$ and reward set $\cY$. The baseline potential function $\f$ is $\|\cX\|_2$-Lipschitz with respect to $\|\cdot\|_2$, where $\|\cX\|_2 = \sup_{x \in \cX} \|x\|_2$. From Lemma \ref{lem:Duchi} and Corollary \ref{lem:genericregret2}, it follows that
\[\Regret \leq \eta\sqrt{N} \|\cX\|_2 + \frac{\|\cX\|_2}{2}\sum_{t=1}^{T} \|\theta_t\|_2^2,\]
which is $O(N^{\frac{1}{4}}\|\cX\|_2\|\cY\|_2\sqrt{T})$ after tuning $\eta$.
This regret bound, however, often gives a suboptimal dependence on the dimension $N$. For example, it gives $O(N^{\frac{3}{4}}T^{\frac{1}{2})}$ regret bound for the experts setting where $\|\cX\|_2 = 1$ and $\|\cY\|_2 = \sqrt{N}$, and $O(N^\frac{1}{4}T^\frac{1}{2})$ regret bound  for the Euclidean balls setting where $\|\cX\|_2=\|\cY\|_2 = 1$.

\subsection{Online Convex Optimization} In online convex optimization, the learner receives a sequence of convex functions $f_t$ whose domain is $\cX$ and its subgradients are in the set $\cY$ \citep{DBLP:conf/icml/Zinkevich03}. After the learner plays $\w_t \in \cX$,  the reward function $f_t$ is revealed. The learner gains $f_t(\w_t)$ and observes  $\nabla f_t(\w_t)$,  a subgradient of $f_t$ at $\w_t$. 

A simple linearization argument shows that our regret bounds for online linear optimization generalize to online convex optimization.
Let $\w^*$ be the optimal fixed point in hindsight. The true regret is upper bounded by the linearized regret, as $f_t(\w^*) - f_t(\w_t) \leq \langle \w^* - \w_t, \nabla f_t(\w_t)\rangle$ for any subgradient $\nabla f_t(\cdot)$, and our analysis bounds the linearized regret. Unlike in the online linear optimization settings, however, the regret bound is valid only for the GBPA with smoothed potentials, which plays the \emph{expected action} of FTPL.

\section{Online Linear Optimization via Inf-conv Smoothing}
\label{sec:infconv}
\cite{journals/siamjo/BeckT12} proposed inf-conv smoothing, which is an infimal convolution with a strongly smooth function. In this section, we will show that FTRL is equivalent to the GBPA run with the inf-conv smoothing of the baseline potential function.

Let $(\mathcal{X}, \|\cdot\|)$ be a normed vector space, and $(\mathcal{X}^\star, \|\cdot\|_\star)$ be its dual. Let $\f: \mathcal{X}^\star \to \RR$ be a closed proper convex function, and let $\mathcal{S}$ be a $\beta$-strongly smooth function on $\cX^\star$ with respect to $\|\cdot\|_\star$. Then, the \emph{inf-conv smoothing} of $\f$ with $\mathcal{S}$ is defined as:
\begin{equation}
\label{eq:infconv}
		\f^{\ic}(\XX; \eta, \mathcal{S}) \overset{\text{def}}{=} \inf_{\Theta^* \in \mathcal{X}^\star}\Big\{\f(\Theta^*) + \eta \mathcal{S}\left(\frac{\XX - \XX^*}{\eta}\right)\Big\} = \max_{\w \in \mathcal{X}}\Big\{\langle \w, \XX \rangle - \f^\star(\w) - \eta \mathcal{S}^\star(\w)\Big\} .
\end{equation}
The first expression with infimum is precisely the infimal convolution of $\f(\cdot)$ and $\eta\mathcal{S}(\frac{\cdot}{\eta})$, and the second expression with supremum is an equivalent dual formulation. The inf-conv smoothing $\f^{\ic}(\XX; \eta, \mathcal{S})$ is finite, and it is an $\eta$-smoothing of $\f$ (Definition \ref{def:smoothable}) with smoothing parameters
\begin{equation}
\label{eq:infconv_const}
\left(\max_{\Theta \in \mathcal{X}^\star} \max_{\w \in \partial \f(\XX)} \mathcal{S}^\star(\w), \beta, \| \cdot \|	\right).
\end{equation}
where $\partial \f(\XX)$ is a set of subgradients of $\f$ at $\XX$.

\paragraph{Connection to FTRL}  Consider an online linear optimization problem with decision set $\mathcal{X} \subseteq \RR^N$.
Then, the dual space $\mathcal{X}^\star$ is simply $\RR^N$. 
Let $\cR$ be a $\beta$-strongly convex function on $\cX$ with respect to a norm $\|\cdot\|$. By the strong convexity-strong smoothness duality, $\mathcal{R}^\star$ is $\frac{1}{\beta}$-strongly smooth.
Consider the inf-conv smoothing of the baseline potential function $\f$ with $\cR^\star$, denoted $\f^{\ic}(\XX; \eta, \cR^\star)$. We will that the GBPA run with $\f^{\ic}(\XX; \eta, \cR^\star)$ is equivalent to FTRL with $\cR$ as the regularizer.

First, note that the baseline potential is the convex conjugate of the null regularizer, i.e., $\f^\star(\w) = 0$ for all $\w \in \mathcal{X}$. The dual formulation of inf-conv smoothing (Equation \ref{eq:infconv}) thus becomes
\[\f^{\ic}(\XX; \eta, \mathcal{S}) = \max_{\w \in \mathcal{X}}\Big\{\langle \w, \XX \rangle - \eta \cR(\w)\Big\},\]
which is identical to Equation \ref{eq:ftrl_potential} except that the above expression has an extra parameter $\eta$ that controls the degree of smoothing. To simplify the deviation parameter in Equation \ref{eq:infconv_const}, note that the subgradients of $\f$ always lie in $\mathcal{X}$ because of duality. Hence, the two supremum expressions collapse to one supremum: $\max_{w \in \mathcal{X}} \mathcal{S}^\star(\w)$. Plugging the smoothing parameters into Corollary \ref{lem:genericregret2} gives the well-known FTRL regret bound as in Theorem 2.11 or 2.21 of \cite{Sasha:2012:OLO}:
\[ \Regret \leq \eta \max_{w \in \mathcal{X}} \mathcal{S}^\star(\w) + \frac{\beta}{2\eta}\sum_{t=1}^{T}\|\theta_t\|^2. \]








\acks{CL and AT gratefully acknowledge the support of NSF under grant IIS-1319810. We thank the anonymous reviewers for their helpful suggestions. We would also like to thank Andre Wibisono for several very useful discussions and his help improving the manuscript. Finally we thank Elad Hazan for early support in developing the ideas herein.}

\newpage

\bibliography{colt2014}

\newpage

\appendix

\section{Gradient-Based Prediction Algorithm}
\subsection{Proof of Lemma \ref{lem:genericregret1}}
\label{app:gbpa_regret}
\begin{proof}
We note that since $\f_0(0) = 0$,
\begin{align*}
\f_T(\XX_T) &= \sum_{t = 1}^{T} \f_{t}(\XX_{t}) - \f_{t - 1}(\XX_{t - 1}) \\
&= \sum_{t = 1}^{T} \Big( \big(\f_{t}(\XX_t) - \f_{t}(\XX_{t-1}) \big)+ \big(\f_{t}(\XX_{t-1}) - \f_{t - 1}(\XX_{t-1})\big) \Big) \\
\end{align*}
The first difference can be rewritten as:
\[\f_t(\XX_t) - \f_t(\XX_{t-1}) 
 = \langle  \nabla \f_t(\XX_{t-1}), \XX_{t}) \rangle + D_{\f_t}(\XX_{t}, \XX_{t - 1})\]
By combining the above two,
\begin{align*}
\Regret &= \f(\XX_T) - \sum_{t=1}^{T} \langle \nabla \f_t(\XX_{t-1}), \XX_t \rangle \notag\\
&= \f(\XX_T) - \f_T(\XX_T) + \sum_{t=1}^{T}  D_{\f_t}(\XX_t, \XX_{t-1}) + \f_t(\XX_{t-1}) - \f_{t-1}(\XX_{t-1}) \notag
\end{align*}
which completes the proof.
\end{proof} 

\section{FTPL-FTRL Duality}
\subsection{Proof of Theorem \ref{thm:ftpl-ftrl-bijecction}}
\label{app:ftpl-ftrl-bijecction}

\begin{proof}
Consider a one-dimensional online linear optimization prediction problem where the player chooses an action $\w_t$ from $\cX = [0, 1]$ and the adversary chooses a reward  $\theta_t$ from $\mathcal{Y} = [0, 1]$. This can be interpreted as a two-expert setting; the player's action $ w_t \in \cX $ is the probability of following the first expert and $\theta_t$ is the net excess reward of the first expert over the second. The baseline potential for this setting is $\f(\Theta) = \max_{w \in [0,1]} w\Theta$.

Let us consider an instance of FTPL with a continuous distribution $\mathcal{D}$ whose cumulative density function (cdf) is $F_\mathcal{D}$. Let $\tf$ be the smoothed potential function (Equation \ref{eq:ftpl_potential}) with distribution $\cD$. Its derivative is
\begin{equation}
\tf'(\Theta) = \EE[\argmax_{w \in K} w(\Theta + u)] = \PP[u > -\XX]
\end{equation}
because the maximizer is unique with probability 1.
Notice, crucially, that the derivative $\tilde{\f}'(\Theta)$ is exactly the expected solution of our FTPL instance. Moreover, by differentiating it again, we see that the second derivative of $\tf$ at $\Theta$ is exactly the pdf of $\mathcal{D}$ evaluated at $(-\XX)$.

We can now precisely define the mapping from FTPL to FTRL. Our goal is to find a convex regularization function $\cR$ such that 
$\PP(u > - \XX) = \arg\max_{\w \in {\cX}} \ (  \w \Theta  - \cR(\w))$. Since this is a one-dimensional convex optimization problem, we can differentiate for the solution. The characterization of $ \cR $ is:
\begin{equation}
\label{eq:ftpl_ftrl_bijection}
\cR(\w) - \cR(0) = - \int_0^\w F_{\mathcal{D}}^{-1}(1-z) \mathrm{d}z.
\end{equation}
Note that the cdf $ F_{\mathcal{D}}(\cdot) $ is indeed invertible since it is a strictly increasing function.

The inverse mapping is just as straightforward. Given a regularization function $\cR$ well-defined over $[0,1]$, we can always construct its Fenchel conjugate $\cR^\star(\Theta) = \sup_{w \in \cX} \langle w, \Theta \rangle - \cR(w)$. 
The derivative of $\cR^\star$ is an increasing convex function, whose infimum is 0 at $\Theta = -\infty$ and supremum is 1 at $\XX = +\infty$. Hence, $\cR^\star$  defines a cdf, and an easy calculation shows that this perturbation distribution exactly reproduces FTRL corresponding to $\cR$.
\end{proof}


\section{Gaussian smoothing}

\subsection{Proof of Equation \ref{eq:maxgauss}}
\label{app:maxgauss}
Let $X_1, \ldots, X_N$ be independent standard Gaussian random variables, and let $Z = \max_{i =1, \ldots, N } X_i$. For any real number $a$, we have
\[\exp(a\mathbb{E}[Z]) \leq \mathbb{E} \exp (a Z) = \mathbb{E} \max_{i = 1, \ldots, N}\exp (a X_i) \leq \sum_{i = 1}^N \mathbb{E} [\exp (a X_i)] = N \exp (a^2/2).\]
The first inequality is from the convexity of the exponential function, and the last equality is by the definition of the moment generating function of Gaussian random variables. Taking the natural logarithm of both sides and dividing by $a$ gives \[\EE[Z] \leq \frac{\log N}{a} + \frac{a}{2}.\] In particular, by choosing $a = \sqrt{2\log N}$, we have $\EE[Z] \leq \sqrt{2\log N}.$

\subsection{Proof of Lemma \ref{lem:changingf}}
\label{app:changingf}
\begin{proof}
By the subadditivity (triangle inequality) of $\Phi$,
\begin{align}
	\tilde{\f}(\Theta; \eta, \mathcal{N}(0, I)) - \tilde{\f}(\Theta; \eta', \mathcal{N}(0, I)) 
	&= \EE_{u \sim \mathcal{N}(0,I)}[\f(\XX + \eta u) - \f(\XX + \eta' u)]  \\
	&\leq \EE_{u \sim \mathcal{N}(0,I)}[\Phi((\eta - \eta')u)]
\end{align}
and the statement follows from the positive homogeneity of $\Phi$.
\end{proof}

\subsection{Proof that the origin is the worst case (Lemma \ref{thm:hessian_l2})}
\label{app:gaussian-l2worst}
\begin{proof}
Let $\Phi(\Theta) = \|\Theta\|_2$ and $\eta$ be a positive number. By  continuity of eigenvectors, it suffices to show that the maximum eigenvalue of the Hessian matrix of the Gaussian smoothed potential $\tilde{\f} (\XX; \eta, \mathcal{N}(0,I))$  is decreasing in $\|\XX\|$ for $\|\XX\|>0$.

By Lemma \ref{lem:smoothinggd}, the gradient can be written as follows:
\begin{equation}
\nabla \tilde{\Phi}(\XX; \eta, \mathcal{N}(0,I)) = \frac{1}{\eta}\EE_{u \sim \mathcal{N}(0,I)}[u\|\XX + \eta u\|]
\end{equation}
Let $u_i$ be the $i$-th coordinate of the vector $u$. Since the standard normal distribution is spherically symmetric, we can rotate the random variable $u$ such that its first coordinate $u_1$ is along the direction of $\XX$. After rotation, the gradient can be written as
$$\frac{1}{\eta} \EE_{u \sim \mathcal{N}(0, I)}\left[u \sqrt{(\|\XX\| + \eta u_1)^2 + \sum_{k=2}^{N} \eta^2 u_k^2} \right]$$
which is clearly independent of the coordinates of $\XX$. The pdf of standard Gaussian distribution has the same value at $(u_1, u_2,\ldots,u_n)$ and its sign-flipped pair $(u_1, -u_2,\ldots,-u_n)$. Hence, in expectation, the two vectors cancel out every coordinate but the first, which is along the direction of $\XX$. Therefore, there exists a function $\alpha$ such that $\EE_{u \sim \mathcal{N}(0,I)}[u\|\XX + \eta u\|] = \alpha(\|\XX\|)\XX.$

\newcommand{\uu}{a}
\newcommand{\bb}{b}
\newcommand{\BB}{b^2}
\newcommand{\rest}{\mathrm{rest}}
Now, we will show that $\alpha$ is decreasing in $\|\XX\|$. Due to symmetry, it suffices to consider $\XX = t e_1$ for $t \in \RR^+$, without loss of generality. For any $t > 0$,
\begin{align}
\alpha(t) &= \EE[u_1 \sqrt{(t + \eta u_1)^2 + u_{\rest}^2})] / t \notag\\
&= \EE_{u_{\rest}}[\EE_{u_1}[u_1\sqrt{(t + \eta u_1)^2 + \BB} | u_{\rest} = \bb ]] / t \notag\\
&= \EE_{u_{\rest}}[\EE_{\uu = \eta|u_1|}[a \Big(\sqrt{(t + \uu)^2 + \BB} - \sqrt{(t - \uu)^2 + B}\Big) | u_{\rest} = \bb ]] / t \notag
\end{align}

Let $g(t) = \Big(\sqrt{(t + \uu)^2 + B} - \sqrt{(t - \uu)^2 + B}\Big) / t$. Take the first derivative with respect to $t$, and we have:
\begin{align*}
g'(t) &= \frac{1}{t^2} \left( \sqrt{(t-\uu)^2 + b^2} - \frac{t(t-\uu)}{\sqrt{(t+\uu)^2 + b^2}} - \sqrt{(t+\uu)^2 + b^2} + \frac{t(t-\uu)}{\sqrt{(t+\uu)^2 + b^2}} \right)\\
&= \frac{1}{t^2}\left(\frac{a^2 + b^2 - a t}{\sqrt{(t - a)^2 + b^2}}- \frac{a^2 + b^2 + a t}{\sqrt{(t + a)^2 + b^2}} \right)
\end{align*}
$$\Big((a^2 + b^2) - a t \Big)^2 
\Big( (t + a)^2 + b^2 \Big) 
- 
\Big((a^2 + b^2) + at\Big)^2 
\Big( (t - a)^2 + b^2 \Big) 
= -4ab^2t^3 < 0$$
because $t, \eta, u', B$ are all positive. So, $g(t) < 0$, which proves that $\alpha$ is decreasing in $\XX$.

The final setp is to write the gradient as $\nabla (\tilde{\f}; \eta, \mathcal{N}(0, I))(\XX) = \alpha(\|\XX\|)\XX$ and differentiate it:
\[\nabla^2 f_\eta(\XX) = \frac{\alpha'(\|\XX\|)}{\|\XX\|}\XX\XX^T + \alpha(\|\XX\|)I\]
The Hessian has two distinct eigenvalues $\alpha(\|\XX\|)$ and $\alpha(\|\XX\|) + \alpha'(\|\XX\|) \|\XX\|$, which correspond to the eigenspace orthogonal to $\XX$ and parallel to $\XX$, respectively. Since $\alpha'$ is negative, $\alpha$ is always the maximum eigenvalue and it decreases in $\|\XX\|$. 
\end{proof}

%

\end{document}